\documentclass[]{ROLA_tech_report}

\usepackage{dblfloatfix}
\usepackage{float}
\usepackage{algorithm}
\usepackage{algpseudocode}
\usepackage{hhline}
\usepackage{wrapfig}
\usepackage{url}
\definecolor{ourshade}{RGB}{230,230,250}
\newcommand{\best}[1]{\textbf{#1}}

\newcommand{\cmark}{\textcolor{green!55!black}{$\checkmark$}}
\newcommand{\xmark}{\textcolor{red!70!black}{$\times$}}
\newcommand{\pmark}{\textcolor{orange!85!black}{$\approx$}}
\definecolor{tridentRed}{HTML}{B9472F}
\definecolor{tridentOrange}{HTML}{E97845}
\definecolor{tridentLight}{HTML}{FFF0E9}
\definecolor{tridentHeader}{HTML}{FFF7F2}
\definecolor{tridentGray}{HTML}{666666}

\newcommand{\R}{\mathbb{R}}
\newcommand{\bR}{\mathbf{R}}

\newcommand{\softmax}{\operatorname{softmax}}
\newcommand{\SiLU}{\operatorname{SiLU}}
\newcommand{\sigmoidf}{\sigma}
\newcommand{\RMSNorm}{\operatorname{RMSNorm}}
\newcommand{\dd}{d_h}

\newcommand{\Pq}{P_q}
\newcommand{\Pk}{P_k}
\newcommand{\qlr}{\tilde{q}}
\newcommand{\klr}{\tilde{k}}
\newcommand{\bigo}{\mathcal{O}}

\newcommand{\rr}{r}

\title{RoLA: Rotary-Positioned Low-Rank Linear Attention\\ for Efficient Diffusion Transformers}

\author[1]{Zekun Zhang$^*$}
\author[1]{Yixiang Cai$^*$}
\author[14]{Yuxi Liu$^{*\ddagger}$}
\author[4]{Tengxu Sun}
\author[3]{Tianle Liu}
\author[1]{Zhoutong Wu}
\author[24]{Haoyu Li}
\author[4]{Baole Ai}
\author[4]{Ang Wang}
\author[4]{Jiamang Wang}
\author[4]{Lin Qu}
\author[1]{Kun Yuan$^\dagger$}
\affiliation[1]{Peking University, Melon Group}
\affiliation[2]{Tsinghua University}
\affiliation[3]{University of Electronic Science and Technology of China}
\affiliation[4]{Alibaba Group}
\firstpagenotes{$^*$ Equal contribution.\\ $^\dagger$ Corresponding author.\\ $^\ddagger$ Project leader.}
\appto{\affiliationlist}{%
\par\vskip 1.5mm
{\affiliationfont\sffamily\bfseries
GitHub:
\href{https://github.com/AlibabaResearch/SparkDiffusion}{%
{\rmfamily\bfseries https://github.com/AlibabaResearch/SparkDiffusion}%
}%
}%
}
\paperemail{%
\href{mailto:zhangzekun@std.uestc.edu.cn}{zhangzekun@std.uestc.edu.cn}
\quad
\href{mailto:caiyixiang@bupt.edu.cn}{caiyixiang@bupt.edu.cn}
\quad
\href{mailto:yuxiliu666@stu.pku.edu.cn}{yuxiliu666@stu.pku.edu.cn}
\quad
\href{mailto:kunyuan@pku.edu.cn}
{kunyuan@pku.edu.cn}
}
\date{\today}

\abstract{
Diffusion Transformers (DiTs) achieve strong video generation quality, but their dense spatiotemporal
self-attention scales quadratically with sequence length and quickly becomes the dominant inference
bottleneck. Sparse low-rank hybrids alleviate this cost by combining a local sparse branch with a
global compressed branch. In video DiTs equipped with 3D Rotary Position Embeddings (RoPE), the global
branch faces a structural compatibility issue: when RoPE is applied before a nonlinear feature map, the
rotation and nonlinearity generally do not commute, making it difficult to keep a query-independent
linear summary while preserving relative rotary geometry. Existing work often sidesteps this issue by
replacing genuine cross-token global aggregation with coordinate-conditioned surrogates or learnable
absolute positional modules. These compromises can be effective, but they approximate relative decay
from absolute coordinates and introduce extra positional parameters. We propose \textbf{RoLA}, a rotary-positioned
low-rank linear-attention branch that keeps genuine cross-token aggregation while remaining compatible
with a reusable linear summary. The design applies RoPE \emph{outside} the nonlinear low-rank feature
map and reuses a truncated subset of the pre-trained rotary schedule matched to the
low-rank bottleneck.
 This yields a linear-time low-rank global branch with relative positional behavior by
design and no additional positional parameters; the full sparse--low-rank module still includes the
fixed-sparsity sparse branch. Experiments on open-source video DiTs show that the resulting method
remains competitive in generation quality at 90\% sparsity while achieving 2.63$\times$ end-to-end
inference speedup on Wan2.1-14B (720p, 81 frames, measured on an NVIDIA H100 GPU).
}

\begin{document}

\maketitle

\begin{center}
\includegraphics[width=\linewidth]{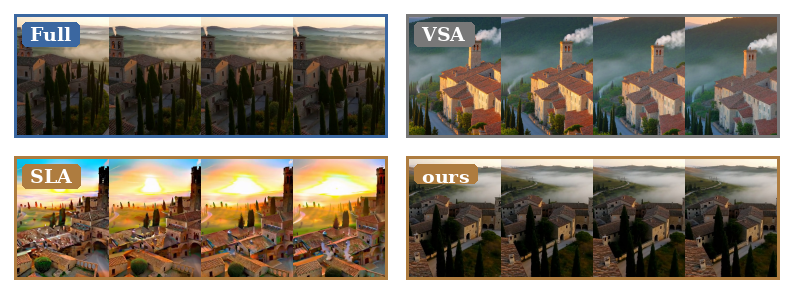}

\small Visual comparison of video generation quality across different attention methods.
\end{center}

\section{Introduction}
\label{sec:intro}

Diffusion Transformers (DiTs)~\citep{peebles2023dit} have become a dominant backbone for high-fidelity
video generation \citep{wan2025,kong2024hunyuan,yang2025cogvideox,hacohen2024ltx}. However, their dense
spatiotemporal self-attention scales quadratically with sequence length, making inference increasingly
expensive as video resolution and duration grow. Sparse attention
methods~\citep{xi2025svg,yang2025svg2,zhang2026faster,wu2025vmoba} reduce this cost by computing only a
selected subset of query--key interactions. While effective at moderate sparsity, aggressive sparsification often removes the
global interactions required for semantic consistency, motion coherence, and stable scene layout.

Sparse--linear hybrids address this limitation by augmenting a sparse local branch with a cheap global
branch~\citep{zhang2025sla,zhang2026sla2,fang2026salad}. In video DiTs, however, this design can encounter
a structural compatibility issue. On the one hand, 3D Rotary Position Embeddings (RoPE) encode relative
geometry through position-dependent rotations. On the other hand, linear attention relies on a
query-independent global summary that can be reused across tokens. If RoPE is pushed through a nonlinear
feature map in the usual way, the rotation and nonlinearity generally do not commute, so the resulting
summary can become entangled with absolute positions. We refer to this tension as the \textbf{RoPE
Dilemma}.

One practical workaround is to replace genuine cross-token aggregation in the global branch with
coordinate-conditioned surrogates and learnable absolute positional modules~\citep{liu2026ropeslr}. This
works, but it leaves two limitations. First, relative decay is approximated from absolute
coordinates rather than represented by the global branch itself. Second, the design gives up the
reusable linear-attention view of global context and introduces extra positional parameters.

This naturally raises the question: \emph{can we build a genuine linear global branch that remains compatible
with 3D relative-position structure?} We answer this question by rotating the already-compressed
low-rank features \emph{after} the nonlinearity. As a result, the branch keeps a reusable global summary
while its pairwise interactions remain functions of relative offsets. We further reuse a truncated
subset of the pre-trained rotary schedule that matches the low-rank bottleneck, avoiding any new
positional module. Together, these two choices turn the compressed branch from an absolute-coordinate
surrogate into a relative-position-aware linear branch, while preserving linear-time reuse and
a hardware-friendly implementation. Although this outside-activation ordering is simple in isolation,
making it work in a pre-trained 3D-RoPE video DiT requires resolving three coupled constraints: the
limited rotary frequency budget of the low-rank bottleneck, stable fusion with a sparse branch on the
backbone scale, and adaptation without disrupting pre-trained attention statistics. We refer to the
resulting method as \textbf{RoLA}.
We do not claim a new generic linear relative-position encoder; rather, RoLA is a video-DiT-specific
sparse-global construction that makes post-activation RoPE compatible with a reusable low-rank global
branch and a sparse deployment engine.
Figure~\ref{fig:efficiency} summarizes the target operating regime: compared with dense attention and
representative sparse or sparse--global baselines, the proposed method reduces attention cost and
end-to-end latency while retaining a genuine global branch. These deployment-oriented results use a
separate RTX 5090 long-sequence profiling setup; they illustrate scaling behavior rather than provide a
direct latency counterpart to the H100-based main benchmark table.

\noindent\textbf{Contributions.} Our contributions are as follows:
\begin{itemize}[leftmargin=1.4em,itemsep=1pt,topsep=2pt]
\item \textbf{Structural compatibility issue.} In 3D-RoPE video DiTs, a genuine low-rank global branch
must preserve relative-position geometry while remaining reusable as a query-independent linear
summary. These two requirements conflict under the usual nonlinear linear-attention formulation.
\item \textbf{Compatible rotary low-rank branch.} The proposed branch applies RoPE after the nonlinear
low-rank feature map and reuses a rank-truncated subset of the pre-trained rotary schedule.
This preserves relative-position structure at the interaction level while keeping a reusable
linear-time global summary, without introducing new positional parameters.

\item \textbf{Mechanistic and empirical validation.} Mechanism-level experiments on real features from
mainstream open-source video models verify the branch's relative-position behavior, and generation
experiments across representative backbones demonstrate favorable quality--efficiency trade-offs under
high sparsity and full-stack inference.
\end{itemize}

\begin{figure*}[!b]
\centering
\includegraphics[width=\textwidth]{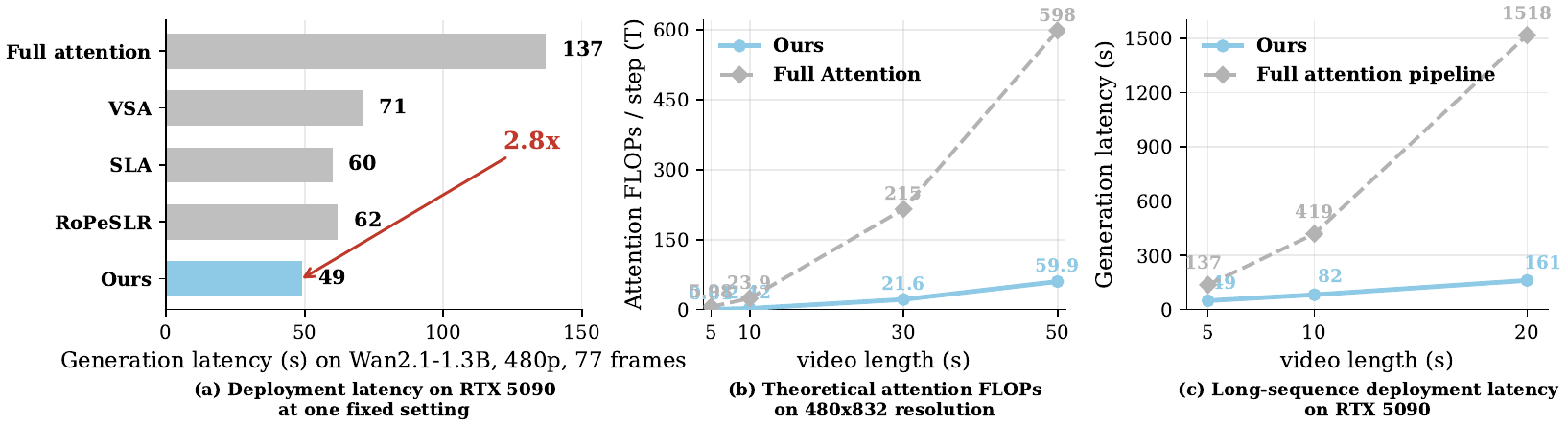}
\caption{Deployment-oriented efficiency summary on RTX 5090 (different from the benchmark evaluation
setting). Panel (a) shows representative fixed-setting deployment latency, panel (b) shows theoretical
self-attention FLOPs per denoising step, and panel (c) shows long-sequence deployment latency.}
\label{fig:efficiency}
\end{figure*}

\section{Related Work}
\label{sec:related}

\noindent\textbf{Sparse attention for video DiTs.} Sparse attention is a dominant approach to the
quadratic attention bottleneck in video DiTs, exploiting structured, learned, or dynamic sparsity in
spatiotemporal interactions. Sparse VideoGen \citep{xi2025svg} exploits spatial--temporal sparsity, while SVG2
\citep{yang2025svg2} further improves sparse token selection through semantic-aware permutation; Efficient-vDiT
\citep{ding2025efficient} leverages repetitive attention tiles, while Sparse-vDiT
\citep{chen2026sparse} identifies recurring structured sparsity patterns across heads and layers. VSA
\citep{zhang2026faster} learns trainable sparse patterns, VMoBA \citep{wu2025vmoba} organizes attention into a
mixture of blocks, Radial Attention \citep{li2026radial} exploits spatiotemporal energy decay for
distance-aware sparse computation, and MOD-DiT \citep{liu2026mixture} dynamically predicts sparsity
patterns across denoising intervals. Despite substantial efficiency gains, these sparse-only approaches
retain only a subset of query-key interactions; under aggressive sparsity, weak but collectively important
long-range context can be discarded, motivating complementary global modeling.

\noindent\textbf{Sparse linear hybrids.} To recover global context discarded by sparsification, recent
methods combine sparse attention with efficient linear branches. SLA \citep{zhang2025sla}, SLA2
\citep{zhang2026sla2}, SALAD \citep{fang2026salad}, and SVG-EAR \citep{zhou2026svg} explore trainable or
lightweight sparse--linear compensation for discarded global interactions. Conventional sparse--linear
formulations can nevertheless encounter the RoPE Dilemma (\cref{sec:prelim}): nonlinear feature maps
generally do not commute with rotary transformations, making it difficult to preserve 3D-RoPE
relative-position structure in a reusable linear branch. The limitation becomes more pronounced at extreme
sparsity, where the linear branch must recover more discarded interactions. Related linear video DiTs,
including SANA-Video \citep{chen2026sana} and SANA-Video 2.0 \citep{chen2026sana2}, further introduce
RoPE-aware and hybrid linear-attention designs. We treat these SANA-style models as a separate dedicated linear/hybrid-linear video-DiT line rather
than as direct baselines, because our main comparison targets sparse
post-training hybrids under the same video-DiT adaptation setting. RoPeSLR, a low-rank Fourier
compensator~\citep{liu2026ropeslr}, instead sidesteps the dilemma in sparse--low-rank attention by replacing
cross-token linear aggregation with a per-token coordinate MLP and a learnable absolute 3D positional
embedding. Building on this
sparse--global decomposition, our method retains genuine cross-token linear aggregation while directly
reusing the pre-trained 3D-RoPE structure in the low-rank branch.

\noindent\textbf{Rotary position embeddings.} RoPE \citep{su2021roformer} encodes relative position through
orthogonal rotations with an exponentially decaying frequency schedule. The compatibility of relative
positional encoding with linear attention has also been studied in LRPE \citep{qin2023linearized}, which
preserves relative positional modeling under linear-complexity computation. In video DiTs, however, the
head dimension is split across the three spatiotemporal axes (3D RoPE), so the per-axis frequency budget is
small; we reuse the first $\rr$ rotary coordinates in the backbone's native ordering, i.e., $\rr/2$ complete rotary coordinate pairs supported by the low-rank bottleneck, provide an idealized error-analysis intuition for this reuse (\cref{app:truncation-bound}), and validate the
resulting operator empirically (\cref{sec:trunc}; Fig.~\ref{fig:truncation}). Unlike LRPE, RoLA is not a generic standalone positional encoder; it is a sparse-video-DiT construction
that preserves a reusable cross-token global branch while remaining compatible with a pre-trained 3D-RoPE
backbone and the sparse engine.

\begin{figure}[t]
\centering
\includegraphics[width=\linewidth]{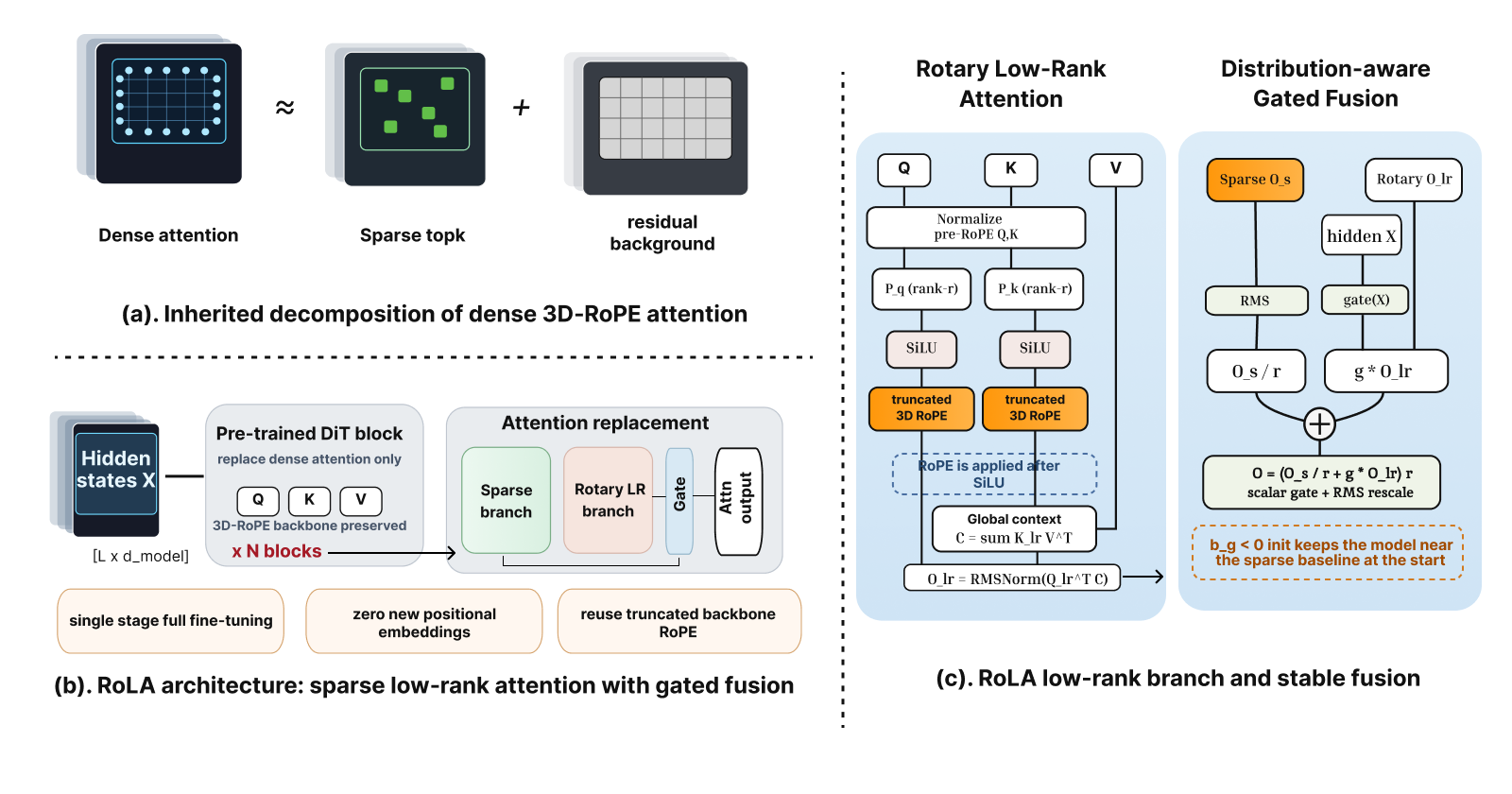}
\caption{Architecture overview of the proposed method. (a) Dense 3D-RoPE attention decomposes into sparse top-$k$
spikes and a residual background. (b) The method replaces dense attention in pre-trained DiT blocks with
sparse--low-rank branches and gated fusion, reusing backbone RoPE without additional positional embeddings.
The right part shows the rotary low-rank branch and distribution-aware gated fusion.}
\label{fig:framework}
\end{figure}

\section{Preliminary}
\label{sec:prelim}

\noindent\textbf{Notation.}
A video is flattened into $L$ spatiotemporal tokens indexed by
$p=(t,x,y)\in[T]\times[H_{\mathrm{s}}]\times[W_{\mathrm{s}}]$, with
$L=T H_{\mathrm{s}} W_{\mathrm{s}}$. Each head projects the input to
$Q,K,V\in\R^{L\times \dd}$, with per-token vectors $q_p,k_p,v_p\in\R^{\dd}$.
We use $p$ and $q$ as token-position indices; $q_p$ and $k_q$ denote the query and key vectors at those
positions. Full attention computes
$$
O=\softmax(QK^\top/\sqrt{\dd})V,
$$
whose $\bigo(L^2)$ cost is the central bottleneck for ultra-long sequences ($L\gtrsim10^4$) induced by
high-resolution video.

\noindent\textbf{3D RoPE.}
3D RoPE partitions the head dimension across the temporal and spatial axes, with
$\dd=d_t+d_x+d_y$, and applies a block-diagonal orthogonal rotation
$\bR(p)\in\R^{\dd\times \dd}$. Its orthogonality gives the relative-position identity
\begin{equation}
s_{p,q}
\;=\;
\tfrac{1}{\sqrt{\dd}}\langle \bR(p)q_p,\, \bR(q)k_q\rangle
\;=\;
\tfrac{1}{\sqrt{\dd}}\langle q_p,\, \bR(q-p)k_q\rangle .
\label{eq:shiftinv}
\end{equation}

\noindent\textbf{RoPE Dilemma.}
Linear attention \citep{katharopoulos2020transformers} relies on a query-independent global summary that
can be reused across queries, whereas RoPE encodes relative geometry through position-dependent rotations.
For a feature map $\phi$, an inside-RoPE linear branch would form a key-side summary such as
$\sum_q \phi(\bR(q)k_q)v_q^\top$. When $\phi$ is a pointwise nonlinearity, the rotation and the
nonlinearity generally do not commute, so the summary can depend on absolute positions in a way that cannot
be cleanly absorbed into a relative offset. This makes it difficult to simultaneously preserve relative
rotary geometry and maintain a reusable linear-time summary. This mismatch motivates RoLA: in the low-rank branch, RoPE is applied
after the low-rank nonlinearity so that position information is injected without destroying the reusable
global summary. Appendix~\ref{app:proofs} provides simple structural sanity checks for this design choice; we do not claim them as theoretical guarantees.

\begin{figure}[t]
\centering
\includegraphics[width=\linewidth]{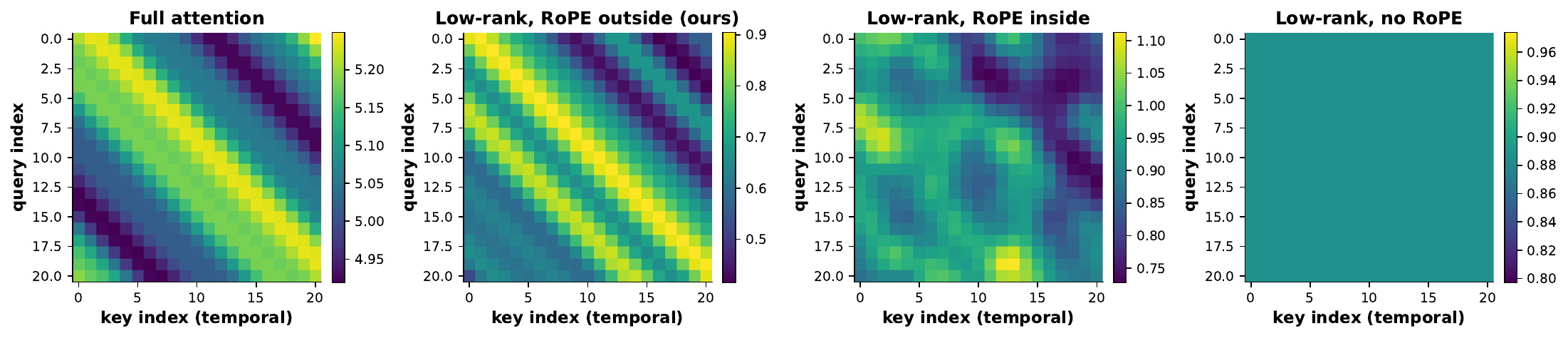}
\caption{Content-controlled positional kernels on a temporal slice. From left to right: full attention, the
low-rank branch with RoPE applied \emph{outside} the activation, the low-rank branch with RoPE applied
\emph{inside} the activation, and the low-rank branch without RoPE. Only the outside-activation design
closely matches the banded relative-distance pattern of full attention. The inside-activation variant
remains structured but distorts the relative geometry, while removing RoPE collapses the map to a nearly
flat, position-blind response. Color bars are shown per panel to visualize structure; the comparison focuses
on the spatial pattern rather than absolute color ranges.}
\label{fig:exp1}
\end{figure}

\section{Method}
\label{sec:method}

The proposed method decomposes attention into two complementary branches: a hardware-efficient block-sparse branch
that captures high-energy semantic spikes, and a low-rank linear branch that reconstructs the smooth global
background continuum. The sparse branch follows standard block-sparse attention, e.g., SLA/VMoBA
\citep{zhang2025sla,wu2025vmoba}, and is orthogonal to our main contribution; we therefore focus on the
low-rank branch. In experiments, we keep this sparse branch fixed across compared sparse-based methods
unless otherwise stated.

\noindent\textbf{Why keep a sparse-plus-global decomposition?}
Several efficient-attention formulations suggest viewing 3D-RoPE video attention as sharp sparse spikes
plus a smoother compressible background~\citep{liu2026ropeslr,zhang2025sla}. We therefore focus
on a narrower question: \emph{what should the compressed global branch look like if it must remain
linear-time while preserving compatibility with relative rotary geometry?}

We therefore do not claim the sparse-plus-low-rank decomposition itself as our contribution. This
decomposition is shared with prior hybrid designs. Our contribution is the \emph{form} of the low-rank
branch: instead of replacing cross-token aggregation with an absolute-coordinate surrogate, our method
keeps a genuine linear-attention branch and makes it compatible with relative rotary geometry by
combining RoPE outside the activation (\cref{sec:outside}) with rank-truncated reuse of the pre-trained
rotary schedule (\cref{sec:trunc}). The resulting design is supported by simple structural sanity checks in
Appendices~\ref{app:exact} and~\ref{app:inside}; these checks are intended to clarify the design
choice and are not claimed as theoretical contributions.

\Cref{tab:design-space} summarizes the component-level design space. The sparse local path and the use
of a global correction are shared with prior efficient-attention designs; our distinction lies in
retaining genuine cross-token global routing, preserving relative geometry in that path, and avoiding an
auxiliary positional module.

\begin{table}[t]
\centering
\small
\setlength{\tabcolsep}{3.5pt}
\caption{Capability-level comparison of representative efficient-attention methods. \cmark\ denotes that
the capability is explicitly present, \xmark\ denotes that it is absent, and \pmark\ denotes an
approximate realization.}
\label{tab:design-space}
\resizebox{\linewidth}{!}{%
\begin{tabular}{lcccccc}
\toprule
Method
& Sparse local path
& Global compensation
& Genuine global routing
& Relative-aware global path
& Reusable linear summary
& No extra positional parameters \\
\midrule
Pure sparse
& \cmark & \xmark & \xmark & \xmark & \xmark & \cmark \\
VSA
& \cmark & \xmark & \xmark & \xmark & \xmark & \cmark \\
SLA
& \cmark & \cmark & \cmark & \xmark & \cmark & \cmark \\
RoPeSLR
& \cmark & \cmark & \xmark & \pmark & \xmark & \xmark \\
\rowcolor{ourshade}
\textbf{RoLA (ours)}
& \cmark & \cmark & \cmark & \cmark & \cmark & \cmark \\
\bottomrule
\end{tabular}}
\end{table}

\subsection{Rotary-Positioned Low-Rank Linear Attention}
\label{sec:lowrank}

Let $q_p^{\mathrm{n}}=\mathrm{norm}_q(q_p)$ and $k_q^{\mathrm{n}}=\mathrm{norm}_k(k_q)$ denote the
main-path normalized, \emph{pre-RoPE} query and key representations, where $\mathrm{norm}_q$ and
$\mathrm{norm}_k$ denote the query/key normalization used in the pre-trained backbone. The low-rank branch
computes
\begin{align}
\qlr_p &= \bR_{\rr}(p)\,\SiLU\!\big(\Pq\, q_p^{\mathrm{n}}\big),
&
\klr_q &= \bR_{\rr}(q)\,\SiLU\!\big(\Pk\, k_q^{\mathrm{n}}\big),
\label{eq:qklr}
\end{align}
and
\begin{align}
C &= \sum_{q=1}^{L}\klr_q\, v_q^\top \;\in\;\R^{\rr\times \dd},
&
O^{\mathrm{lr}}_p &= \RMSNorm\!\big(\qlr_p^\top C\big),
\label{eq:ctx}
\end{align}
where $\Pq,\Pk\in\R^{\rr\times \dd}$ are head-specific projections, $v_q$ is the backbone value vector and
is not rotated, and $\bR_{\rr}(p)$ is the 3D RoPE rotation restricted to the first $\rr$ coordinates in the backbone's native ordering (\cref{sec:trunc}).

 The global context $C$ is computed \emph{once} per head, so the branch costs
$\bigo(L\rr \dd)$, i.e., it is linear in $L$. RMSNorm is applied after aggregation for numerical stability.
Our relative-position analysis concerns the pre-RMSNorm logits; RMSNorm acts as a stabilization layer on the
readout channels.

Two properties distinguish \cref{eq:qklr,eq:ctx} from both standard linear attention and the low rank
compensator used by \citet{liu2026ropeslr}. First, the rotation $\bR_{\rr}$ is applied to the low-rank feature
rather than to the full head dimension. Second, it is applied \emph{after} the nonlinearity $\SiLU$. We
analyze these two choices below. The full forward pass, including the sparse branch and gated fusion, is
given in \cref{alg:forward} (\cref{app:algorithm}). When $\rr$ is close to $\dd/2$, the leading matmul cost
of the low-rank branch can be comparable to that of a full-width linear branch (\cref{app:complexity}); the
purpose of the design is therefore not to reduce the linear-branch constant, but to obtain a
relative-position-aware global branch that remains reusable under high sparsity.

\subsection{Dimensionality Truncation of the Rotary Schedule}
\label{sec:trunc}

\noindent\textbf{Why truncate the rotary schedule?}
The projected feature
$\SiLU(\Pq q^{\mathrm{n}}_p)\in\R^{\rr}$
lives in an $\rr$-dimensional space with $\rr\ll \dd$; we use $\rr=64$ by default. Since RoPE acts on
2D rotary coordinate pairs, the low-rank branch can carry only $\rr/2$ such pairs and therefore cannot
reuse the full $\dd$-dimensional schedule. We truncate the pre-trained schedule to the first $\rr$ coordinates
under the backbone's RoPE coordinate ordering. These coordinates correspond to $\rr/2$ complete rotary
pairs, and we reuse the backbone's own $(\mathrm{freqs\_cos},\mathrm{freqs\_sin})$ tensors. This adds no
positional parameters and keeps the low-rank branch on the same geometric scale as the sparse branch and
the pre-trained backbone. Concretely, we take the first $\rr$ entries of the backbone's flattened RoPE
dimension, without reordering or re-learning frequencies. Because 3D RoPE partitions dimensions across axes,
this truncation inherits the backbone's per-axis allocation; the same construction applies to all axes, and
we report representative temporal and height slices in the mechanism experiments.

\begin{figure}[t]
\centering
\includegraphics[width=0.8\linewidth]{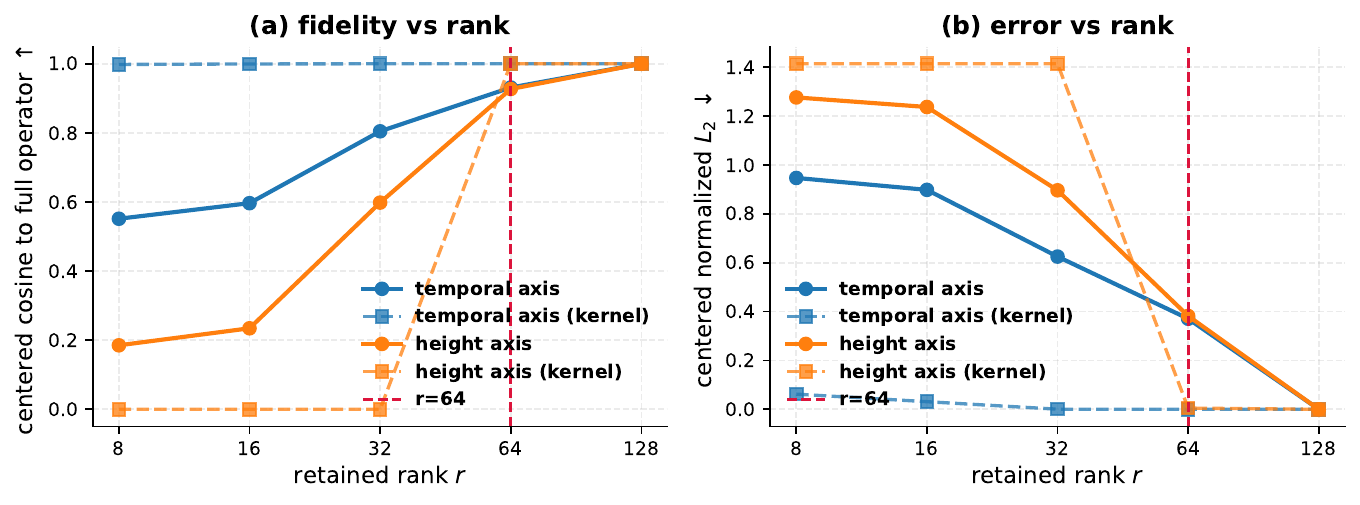}
\caption{Fidelity of the rank-$\rr$ truncated RoPE operator relative to the full
operator ($\rr=128$). The left panel reports centered cosine similarity, and the right panel reports
centered normalized $L_2$ error as $\rr$ increases. Solid lines use real-content operators, while dashed
lines use content-controlled positional kernels; blue and orange correspond to temporal and height
slices. Both metrics show that fidelity saturates by $\rr=64$, motivating our default truncation rank.}

\label{fig:truncation}
\end{figure}

\noindent\textbf{Why is $\rr=64$ sufficient in practice?}
The rank-$\rr$ truncation is motivated by a simple error intuition: if the retained rotary coordinates
capture most of the positional structure relevant to the global branch, the
discarded rotary coordinates contribute only a small residual score.
 \Cref{app:truncation-bound} gives an idealized nested-setting intuition for why such truncation can be benign. Because the trained $\mathbf{P}_q, \mathbf{P}_k$ are learned directly in the low-rank space, this intuition should not be read as a guarantee for the trained branch; the empirical fidelity sweep below is the operative evidence.
This intuition is consistent with the sparse-plus-smooth background view in
RoPeSLR~\citep{liu2026ropeslr}: the global branch mainly models a smooth and compressible
background, suggesting that its positional structure can be captured in a reduced rotary subspace.
In our implementation, we retain the first $\rr$ rotary coordinates, i.e., $\rr/2$ complete rotary coordinate pairs, in the backbone's native ordering; we do not reorder these coordinates by frequency. Empirically, Fig.~\ref{fig:truncation} evaluates this design choice: the centered cosine similarity
rises and the normalized $L_2$ error falls monotonically, with both metrics saturating by $\rr=64$.
Concretely, the temporal cosine improves from $0.55$ to $0.80$ and $0.93$ at $\rr{=}8,32,64$, while the
height cosine improves from $0.19$ to $0.60$ and $0.93$. We therefore choose $\rr=64$ as the default
operating point.

\subsection{RoPE Outside the Activation}
\label{sec:outside}

A key design choice in the low-rank branch concerns the \emph{order} of the rotation and the nonlinearity.
In RoLA, we first apply the low-rank projection and the pointwise nonlinearity, and then apply the
truncated RoPE rotation. This ordering is motivated by a simple separation of roles: the nonlinear low-rank
map should extract position-free content features, while RoPE should act only as a relative-position
operator. When rotation is applied after the nonlinearity, the global context can still be accumulated once
and reused by all queries; the positional effect appears only when a query reads out this context.
Appendix~\ref{app:exact} gives a short structural sanity check of this relative-position property.

If RoPE is instead applied before the nonlinearity, the rotation and the pointwise activation no longer
commute. As a result, position information becomes entangled with the content features before the global
aggregation, and the resulting key-side summary can depend on absolute positions rather than only on
relative offsets. Appendix~\ref{app:inside} gives a simple obstruction argument explaining why the
inside-activation ordering cannot generally admit an exact relative linear-attention factorization. This argument is intended as a design motivation rather than a learnability guarantee. Our
mechanism experiments in \cref{sec:mechanism} test these two predictions directly.

\begin{remark}[Contrast with absolute-PE injection]
The low-rank Fourier compensator of \citet{liu2026ropeslr} sidesteps the dilemma by removing cross-token routing
altogether and re-injecting position as a learnable absolute embedding. It learns to imitate relative decay
from absolute coordinates. RoLA instead keeps genuine cross-token routing and obtains relative
positional behavior through the outside-activation rotary design. The two designs are complementary points
in the design space; we compare them empirically in \cref{sec:ablation}.
\end{remark}

\subsection{Feature Alignment and Gated Fusion}
\label{sec:fusion}

Fusing localized sparse outputs $O^{\mathrm{s}}$ with the global low-rank output $O^{\mathrm{lr}}$ requires
variance stabilization. We normalize with RMSNorm \citep{zhang2019root} and combine the two branches through
a token-wise scalar gate
$$
g=\sigmoidf(X W_g + b_g)\in(0,1)^{L\times1},
$$
where $X$ denotes the input hidden states of the attention block and $W_g\in\R^{d_{\mathrm{model}}\times 1}$.
The gate is deliberately bottlenecked to one scalar per token, so it modulates the magnitude of the low-rank
branch rather than changing its channel-wise direction. The RMS of the sparse output provides a per-token
scale reference for stable fusion:
\begin{equation}
r_p=\sqrt{\tfrac{1}{\dd}\textstyle\sum_i (O^{\mathrm{s}}_{p,i})^2+\epsilon},
\qquad
O_p=\Big(\tfrac{O^{\mathrm{s}}_p}{r_p}+g_p\, O^{\mathrm{lr}}_p\Big)\,r_p .
\label{eq:gate}
\end{equation}
The sparse-output RMS is a natural per-token scale because it reflects the magnitude of the branch that
remains active at that token. Dividing by $r_p$ normalizes the sparse branch, while multiplying back restores
the original token scale after adding the gated global correction. The gate bias $b_g$ is initialized to a
negative value so that the branch starts close to the sparse baseline, allowing the low-rank contribution to
be introduced gradually during alignment.

\subsection{Implementation Details}
\label{sec:impl}

\noindent\textbf{Post-training strategy.}
\label{sec:posttrain}
RoLA is integrated into a pre-trained DiT by a single-stage full fine-tuning procedure. We initialize
the new parameters
$$
\Theta_{\mathrm{new}}=\{\Pq,\Pk,W_g,b_g\}
$$
with Kaiming initialization \citep{he2015delving} for the projections, zero for $W_g$, and a negative $b_g$
so that the module starts close to the sparse baseline. We then replace the attention module with RoLA
and jointly optimize all parameters, including the backbone and $\Theta_{\mathrm{new}}$, under the standard
diffusion flow-matching objective $\mathcal{L}_{\mathrm{task}}$ \citep{lipman2022flow}. The adaptation is
lightweight relative to pre-training: only the new branch parameters are trained from scratch, while the
backbone is updated with a small learning rate. The rationale for adopting this single-stage recipe, instead
of the two-stage strategy used in \citet{liu2026ropeslr}, is discussed in \cref{app:onestage}.

\noindent\textbf{Engineering: fused inference kernel.}
\label{sec:kernel}
At inference, the low-rank branch is evaluated using two fused Triton kernels. Because RoPE acts on
even/odd coordinate pairs, we split each projection weight into even and odd rows offline and process the
two halves as pure 2D matmuls, keeping the computation as dense matrix multiplications and avoiding
irregular indexing. Kernel 1 accumulates
$$
C_{\mathrm{even}}=\sum_q\klr^{\mathrm{even}}_q v_q^\top,
\qquad
C_{\mathrm{odd}}=\sum_q\klr^{\mathrm{odd}}_q v_q^\top
$$
over key/value tiles. Kernel 2 forms
$$
O^{\mathrm{lr}}_p=
\qlr^{\mathrm{even}}_p C_{\mathrm{even}}
+
\qlr^{\mathrm{odd}}_p C_{\mathrm{odd}}
$$
and applies the per-token RMSNorm. This implementation exactly reproduces \cref{eq:qklr,eq:ctx}. Detailed
FLOPs derivations and asymptotic complexity analysis are provided in \cref{app:complexity}.

\section{Experiments}
\label{sec:exp}

We evaluate RoLA at both the mechanism level and the generation level. Mechanism experiments test the
two design principles of the method, while generation experiments measure quality and efficiency on
full-scale video models.

\subsection{Mechanism-Level Validation}
\label{sec:mechanism}

\noindent\textbf{Protocol.}
Mechanism experiments use pre-trained Wan2.1-T2V-1.3B \citep{wan2025} weights with $\dd=128$ and $\rr=64$.
We extract normalized query/key features from real 480p, 81-frame denoising trajectories, average over heads
and denoising steps, and use the low-rank projections from the single-stage fine-tuned checkpoint. For all
positional metrics, we use pre-RMSNorm low-rank logits.

\begin{figure}[t]
\centering
\includegraphics[width=\linewidth]{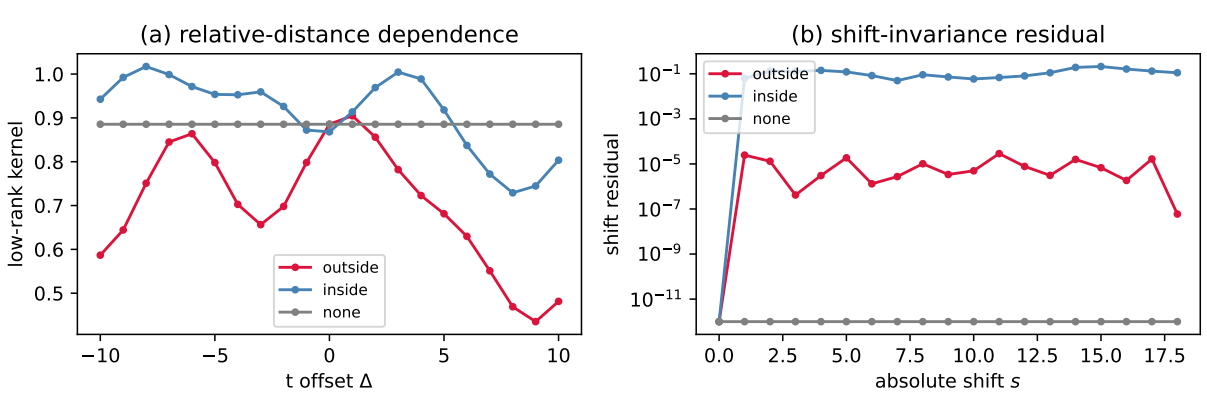}
\caption{Effect of RoPE placement in the low-rank branch. Panel (a) plots the low-rank kernel as a function
of relative offset, testing whether the branch is position-aware. Panel (b) plots the shift-invariance
residual under absolute translation, testing whether that dependence is purely relative. RoPE applied
\emph{outside} the activation is the only configuration that satisfies both conditions: non-trivial offset
dependence in (a) and near-zero shift residual in (b).}
\label{fig:exp4}
\end{figure}

\noindent\textbf{Low-rank branch encodes useful positional structure only with outside-RoPE
(\cref{fig:exp1}).}
We construct the low-rank attention map for a contiguous temporal slice and compare it with the
full-attention positional kernel under content-controlled inputs. The full kernel exhibits a clear banded
distance-decay structure. When RoPE is applied \emph{outside} the activation, the low-rank branch closely
matches this banded relative-position pattern. In contrast, the inside-activation variant remains structured
but visibly distorts the clean relative geometry of the full kernel. Removing RoPE collapses the map to a
nearly flat response, indicating that the branch becomes effectively position-blind. This result shows that
RoPE is necessary for the low-rank branch to encode useful positional structure, and that the outside
placement is the one that preserves the desired relative-position geometry.

\begin{wraptable}[6]{r}{0.65\textwidth}
    \centering
    \renewcommand{\arraystretch}{1.10}
    \setlength{\tabcolsep}{5.0pt}
    \caption{Positional fidelity by RoPE ordering.}
    \label{tab:mechanism}
    \resizebox{\linewidth}{!}{
    \begin{tabular}{lccc}
    \arrayrulecolor{tridentRed}
    \toprule
    \rowcolor{tridentHeader}
    \textbf{RoPE ordering} & \textbf{Shift residual} $\downarrow$ & \textbf{Offset range (position-aware)} & \textbf{Behavior} \\
    \arrayrulecolor{tridentOrange}
    \midrule
    No RoPE & $0.0$ & $0.0$ & position-blind \\
    Inside activation & $3.7\times10^{-1}$ & $0.47$ & leaks absolute position \\
    \rowcolor{tridentLight}
    \textcolor{tridentRed}{\textbf{Outside activation (ours)}} & $1.8\times10^{-5}$ & $0.47$ & \textcolor{tridentRed}{\textbf{relative / shift-invariant}} \\
    \arrayrulecolor{tridentRed}
    \bottomrule
    \end{tabular}
    }
    \arrayrulecolor{black}
\end{wraptable}

\noindent\textbf{RoPE placement: outside is both position-aware and shift-invariant
(\cref{fig:exp4,tab:mechanism}).}
For each of the three orderings---rotation \emph{outside} the activation (ours), rotation \emph{inside} the
activation, and \emph{no} rotation---we measure: (i) the relative-distance dependence of the low-rank logit
as a function of the temporal offset $\Delta=p-q$; and (ii) the shift-invariance residual
$$
|s(p{+}\delta,q{+}\delta)-s(p,q)|
$$
at a fixed offset as the pair is translated. An ordering that preserves relative-position structure should
exhibit both a non-trivial dependence on $\Delta$ and a near-zero shift residual. Only the outside-activation
ordering satisfies both conditions: it achieves a shift residual of $1.8{\times}10^{-5}$, which indicates
near-zero translation dependence, while maintaining an offset range of $0.47$ (\cref{tab:mechanism}).
Rotating \emph{inside} the activation produces similar offset variation but a much larger shift residual
($3.7{\times}10^{-1}$), consistent with the absolute-position leakage analyzed in \cref{app:inside}. The
no-RoPE branch has zero shift residual only trivially, because it is entirely position-blind with offset
range $0.0$; we therefore require both non-zero offset dependence and near-zero shift residual.

\begin{wraptable}{R}{0.55\textwidth}
    \centering
    \renewcommand{\arraystretch}{1.10}
    \setlength{\tabcolsep}{5.0pt}
    \caption{Rank-truncation fidelity.}
    \label{tab:trunc}
    \resizebox{\linewidth}{!}{
    \begin{tabular}{lccccc}
    \arrayrulecolor{tridentRed}
    \toprule
    \rowcolor{tridentHeader}
    \textbf{Retained rank $\rr$} & \textbf{8} & \textbf{16} & \textbf{32} & \cellcolor{tridentLight}\textcolor{tridentRed}{\textbf{64 (default)}} & \textbf{128} \\
    \arrayrulecolor{tridentOrange}
    \midrule
    Cosine to full, temporal $\uparrow$ & $0.55$ & $0.60$ & $0.80$ & \cellcolor{tridentLight}$0.93$ & $1.00$ \\
    Cosine to full, height $\uparrow$ & $0.19$ & $0.24$ & $0.60$ & \cellcolor{tridentLight}$0.93$ & $1.00$ \\
    \arrayrulecolor{tridentRed}
    \bottomrule
    \end{tabular}
    }
    \arrayrulecolor{black}
\end{wraptable}

\noindent\textbf{Rank truncation approximates the full operator (\cref{fig:truncation,tab:trunc}).}
We sweep the retained rotary rank $\rr\in\{8,16,32,64,128\}$ and measure how faithfully the rank-$\rr$
truncated operator approximates the full ($\rr=128$) relative-position operator on real Wan Q/K features.
For an axis-specific operator slice $A_r$ and the full-rank reference $A_{128}$, we first subtract the
entry-wise mean to obtain centered operators $\widetilde A_r$ and $\widetilde A_{128}$.
\Cref{fig:truncation} reports two fidelity metrics: the centered cosine similarity and the normalized
$L_2$ error. The centered cosine saturates by $\rr=64$: on the temporal axis it improves from $0.55$ to
$0.80$ and $0.93$ at $\rr{=}8,32,64$, reaching $1.0$ at $\rr{=}128$; on the height axis it improves from
$0.19$ to $0.60$ and $0.93$, also reaching $1.0$ at $\rr{=}128$. The normalized $L_2$ error falls to near
zero at the same point. Both metrics therefore identify $\rr=64$ as a sufficient operating point for
approximating the full relative-position operator on real content, consistent with the truncation intuition
in \cref{app:truncation-bound}.

\subsection{Generation Quality}
\label{sec:generation}

\noindent\textbf{Models and datasets.}
We evaluate RoLA on Wan2.1-1.3B and Wan2.1-14B~\citep{wan2025}. For lightweight adaptation of the
newly introduced branch, we use a curated subset of OpenVid-1M
\citep{nan2024openvid} for both models. Quantitative evaluation follows the official VBench protocol.

\noindent\textbf{Baselines.}
We compare with dense full attention, VSA~\citep{zhang2026faster}, the sparse--linear method SLA
\citep{zhang2025sla}, and the sparse low-rank predecessor RoPeSLR \citep{liu2026ropeslr}. All baselines are
implemented based on their official implementations and adapted to the same evaluation protocol when
necessary. Sparse-based baselines use the same target sparsity when applicable, and all latency comparisons
are measured under the same evaluation setting reported in each table or figure caption.
For trainable baselines such as SLA and RoPeSLR, we use the same adaptation data, batch size, and number of
optimization steps as RoLA; where a method's original implementation requires a specific sparse routing or
training recipe, we keep that method-specific design and otherwise follow the official code path.
We do not include SANA-Video or SANA-Video 2.0 here, since they follow dedicated linear/hybrid-linear
backbone training paradigms rather than the sparse post-training setting studied in this paper.

\noindent\textbf{Metrics.}
We report five VBench dimensions~\citep{huang2023vbench}: Subject Consistency, Motion Smoothness,
Background Consistency, Imaging Quality, and Aesthetic Quality. For each reported VBench dimension, we
follow the official VBench evaluation protocol, using the corresponding official prompt set, evaluator, and
scoring procedure consistently for all methods. We choose these dimensions because they are sensitive to
temporal stability, background consistency, and visual quality, which are often affected by aggressive
sparsification. Efficiency is measured by DiT end-to-end latency, attention sparsity, and theoretical
attention cost. Here latency refers to end-to-end runtime of the DiT denoising component, excluding text
encoding and VAE stages. Sparsity denotes the fraction of query-key attention interactions removed by the
sparse attention engine relative to full attention. We additionally report PSNR, SSIM, and LPIPS as
complementary frame-level fidelity and perceptual metrics, computed against the dense full-attention
generated video under the same prompt and seed; the full row is therefore used as the reference and left
blank.

\begin{table*}[t]
\centering
\scriptsize
\renewcommand{\arraystretch}{1.08}
\setlength{\tabcolsep}{3.8pt}
\caption{Quantitative comparison on Wan2.1-1.3B and Wan2.1-14B benchmarks. We use NVIDIA H100 80G GPUs for
all experiments. Wan2.1-1.3B uses 480p inputs. Wan2.1-14B uses 720p inputs with 81 frames. Latency is
measured for the DiT denoising component under the VBench evaluation setting and is therefore not directly
comparable to the RTX 5090 deployment measurements in Figure~\ref{fig:efficiency}.}
\label{tab:main}
\resizebox{\textwidth}{!}{%
\begin{tabular}{llcccccccccc}
\arrayrulecolor{tridentRed}
\toprule
\rowcolor{tridentHeader}
& & \multicolumn{5}{c}{\textbf{VBench Quality}} & \multicolumn{3}{c}{\textbf{Additional Metrics}} & \multicolumn{2}{c}{\textbf{Efficiency}} \\
\arrayrulecolor{tridentRed}
\hhline{~~----------}
\rowcolor{tridentHeader}
\multirow{-2}{*}{\textbf{Model}} & \multirow{-2}{*}{\textbf{Method}}
& \textbf{SC}$\uparrow$ & \textbf{MS}$\uparrow$ & \textbf{BC}$\uparrow$ & \textbf{IQ}$\uparrow$ & \textbf{AQ}$\uparrow$
& \textbf{PSNR}$\uparrow$ & \textbf{SSIM}$\uparrow$ & \textbf{LPIPS}$\downarrow$
& \textbf{Latency}$\downarrow$ & \textbf{Sparsity}$\uparrow$ \\
\arrayrulecolor{tridentOrange}
\midrule
\multirow{5}{*}{Wan2.1-1.3B}
& Full & 0.9403 & 0.9557 & \best{0.9498} & \best{0.6510} & \best{0.6088} & -- & -- & -- & 50.9s & $0\%$ \\
& VSA & 0.9364 & 0.9757 & 0.9421 & 0.6328 & 0.5629 & 23.05 & 0.819 & 0.149 & 34.3s & $90\%$ \\
& SLA & 0.9401 & 0.9782 & 0.9486 & 0.6343 & 0.5918 & 24.17 & 0.844 & 0.130 & 31.2s & $90\%$ \\
& RoPeSLR & 0.9512 & 0.9789 & 0.9485 & 0.6481 & 0.6010 & 25.01 & 0.843 & 0.129 & 30.4s & $90\%$ \\
& \cellcolor{tridentLight}\textcolor{tridentRed}{\textbf{RoLA (Ours)}} & \cellcolor{tridentLight}\best{0.9521} & \cellcolor{tridentLight}\best{0.9812} & \cellcolor{tridentLight}0.9490 & \cellcolor{tridentLight}0.6492 & \cellcolor{tridentLight}0.6068 & \cellcolor{tridentLight}\best{25.67} & \cellcolor{tridentLight}\best{0.860} & \cellcolor{tridentLight}\best{0.121} & \cellcolor{tridentLight}\best{21.7s} & \cellcolor{tridentLight}$90\%$ \\
\arrayrulecolor{tridentOrange}
\midrule
\multirow{5}{*}{Wan2.1-14B}
& Full & \best{0.9698} & 0.9850 & \best{0.9681} & \best{0.6996} & \best{0.6319} & -- & -- & -- & 1075.7s & $0\%$ \\
& VSA & 0.9010 & 0.9788 & 0.9482 & 0.6012 & 0.5998 & 24.27 & 0.843 & 0.130 & 717.1s & $90\%$ \\
& SLA & 0.9481 & 0.9802 & 0.9511 & 0.6398 & 0.6052 & 26.02 & 0.878 & 0.124 & 601.0s & $90\%$ \\
& RoPeSLR & 0.9601 & 0.9842 & 0.9528 & 0.6627 & 0.6102 & 26.33 & 0.880 & 0.118 & 598.3s & $90\%$ \\
& \cellcolor{tridentLight}\textcolor{tridentRed}{\textbf{RoLA (Ours)}} & \cellcolor{tridentLight}0.9629 & \cellcolor{tridentLight}\best{0.9853} & \cellcolor{tridentLight}0.9537 & \cellcolor{tridentLight}0.6783 & \cellcolor{tridentLight}0.6294 & \cellcolor{tridentLight}\best{28.65} & \cellcolor{tridentLight}\best{0.891} & \cellcolor{tridentLight}\best{0.108} & \cellcolor{tridentLight}\best{409.3s} & \cellcolor{tridentLight}$90\%$ \\
\arrayrulecolor{tridentRed}
\bottomrule
\end{tabular}
}%
\arrayrulecolor{black}
\end{table*}

\noindent\textbf{Quality evaluation.}
As shown in Table~\ref{tab:main}, RoLA reduces computational overhead relative to dense full attention
while remaining competitive on the reported VBench dimensions. Unlike RoPeSLR~\citep{liu2026ropeslr}, whose
global module is a low-rank Fourier compensator that imitates relative decay with an absolute positional
embedding, RoLA injects relative-position structure
without additional positional parameters (\cref{sec:outside}), which is consistent with its improved
mechanism-level fidelity at matched sparsity (\cref{sec:mechanism}). In our evaluation, the VBench
degradation relative to full attention is small. The low-rank branch is intended to recover global context
discarded by pure sparse baselines such as VSA, and at matched sparsity RoLA achieves VBench scores
closer to dense attention on the reported dimensions. Overall, these results support a favorable
quality--efficiency trade-off within the evaluated settings.

\noindent\textbf{Fine-tuning control.}
To isolate the effect of the attention algorithm from adaptation itself, we additionally evaluate the
full-attention model after applying the same fine-tuning protocol as RoLA. The corresponding two-row
comparison is provided in Appendix Table~\ref{tab:abl_full_ft}; it uses the same adaptation data, batch
size, optimization steps, and evaluation protocol, while changing only whether the attention block is
kept dense or replaced by RoLA.

\subsection{Ablation Study}
\label{sec:ablation}
\begin{wraptable}{R}{0.52\textwidth}
    \centering
    \renewcommand{\arraystretch}{1.10}
    \setlength{\tabcolsep}{5.0pt}
    \caption{Ablation on retained rotary rank $\rr$.}
    \label{tab:abl_rank}
    \resizebox{\linewidth}{!}{
    \begin{tabular}{lccccc}
    \arrayrulecolor{tridentRed}
    \toprule
    \rowcolor{tridentHeader}
    \textbf{Rank $\rr$} & \textbf{SC}$\uparrow$ & \textbf{MS}$\uparrow$ & \textbf{BC}$\uparrow$ & \textbf{IQ}$\uparrow$ & \textbf{AQ}$\uparrow$ \\
    \arrayrulecolor{tridentOrange}
    \midrule
    8 & 0.8788 & 0.8715 & 0.9170 & 0.5574 & 0.5640 \\
    16 & 0.8910 & 0.8861 & 0.9298 & 0.5772 & 0.5601 \\
    32 & 0.9398 & 0.9540 & 0.9394 & 0.6138 & 0.5990 \\
    \rowcolor{tridentLight}
    \textcolor{tridentRed}{\textbf{64 (default)}} & 0.9521 & 0.9812 & 0.9490 & \best{0.6492} & 0.6068 \\
    128 & \best{0.9540} & \best{0.9841} & \best{0.9525} & 0.6479 & \best{0.6099} \\
    \arrayrulecolor{tridentRed}
    \bottomrule
    \end{tabular}
    }
    \arrayrulecolor{black}
\end{wraptable}
We ablate the central design axes to isolate their contributions. All ablations use Wan2.1-1.3B at matched
$90\%$ sparsity.

\noindent\textbf{Controlled global-branch comparison.}
To separate the effect of the global branch from the sparse attention pattern, we fix the sparse engine,
pattern, and target sparsity, and replace only the global branch. The resulting comparison is reported in
Appendix Table~\ref{tab:abl_global}. This experiment directly tests whether the gains come from the
rotary low-rank construction rather than from a different sparse routing pattern.

\noindent\textbf{Sparsity-quality trade-off.}
We also evaluate RoLA across multiple sparsity levels and report the selected VBench dimensions in
Appendix Table~\ref{tab:abl_sparsity}. This sweep tests whether the proposed global branch remains useful
as the sparse branch becomes increasingly aggressive.

\begin{wraptable}{R}{0.38\textwidth}
    \centering
    \renewcommand{\arraystretch}{1.10}
    \setlength{\tabcolsep}{5.0pt}
    \caption{Activation ablation. Projections are
    retrained per activation; the sparse-only baseline rel-$L_2$
    is $0.612$.}
    \label{tab:abl_activation}
    \begin{tabular}{lc}
    \arrayrulecolor{tridentRed}
    \toprule
    \rowcolor{tridentHeader}
    \textbf{Activation} & \textbf{Align rel-$L_2$} $\downarrow$ \\
    \arrayrulecolor{tridentOrange}
    \midrule
    Identity (linear) & $0.525$ \\
    Tanh & $0.525$ \\
    ReLU & $0.534$ \\
    GELU & $0.522$ \\
    \rowcolor{tridentLight}
    \textcolor{tridentRed}{\textbf{SiLU (default)}} & $\best{0.521}$ \\
    \arrayrulecolor{tridentRed}
    \bottomrule
    \end{tabular}
    \arrayrulecolor{black}
\end{wraptable}

\noindent\textbf{RoPE placement (structurally checked in \cref{app:exact}).}
We compare no rotation, rotation \emph{inside} the activation, and rotation \emph{outside} the activation.
\Cref{tab:mechanism} shows that only the outside ordering preserves relative-position structure; the inside
ordering leaks absolute position despite remaining position-aware.

\noindent\textbf{Rotary truncation rank (\cref{fig:truncation}).}
We sweep $\rr\in\{8,16,32,64,128\}$. Operator fidelity saturates at $\rr=64$ (\cref{tab:trunc}), and
\cref{tab:abl_rank} shows that downstream quality follows the same trend, with clear gains up to $\rr=64$
and only marginal change beyond it.

\noindent\textbf{Activation function.}
Here the alignment error \emph{Align rel-$L_2$} measures how well the sparse--low-rank output reconstructs
the dense full-attention output: for each activation we retrain the low-rank projections and report the
relative $L_2$ distance $\lVert O^{\mathrm{fused}}-O^{\mathrm{full}}\rVert_2/\lVert O^{\mathrm{full}}\rVert_2$
between the fused attention output and the full-attention reference on real Wan Q/K/V features, averaged over
tokens and heads (lower is better). As a reference, the sparse-only baseline without any low-rank branch
attains rel-$L_2=0.612$.
We ablate SiLU against GELU, ReLU, Tanh, and identity. Table~\ref{tab:abl_activation} shows that SiLU
achieves the lowest alignment error among the tested activations. Identity and Tanh underperform,
indicating that a smooth nonlinearity is beneficial for reconstructing the low-rank branch.

\section{Conclusion}
\label{sec:conclusion}

We present RoLA, a rotary-positioned low-rank attention branch for efficient video Diffusion
Transformers. It addresses the compatibility issue between reusable linear aggregation and 3D relative
rotary geometry in the low-rank global branch. By applying rank-truncated 3D RoPE outside the nonlinear
activation, the branch keeps a reusable linear-time global summary while injecting relative positional
structure without auxiliary absolute-position parameters. Mechanistic analysis and empirical evaluations
indicate that RoLA achieves a favorable quality--efficiency trade-off under high sparsity, retaining
generation fidelity while substantially reducing computational overhead.

\noindent\textbf{Limitations.}
Our current evaluation focuses on two Wan models. The appendix contains structural sanity checks and idealized design intuitions (Propositions \ref{prop:exact}, \ref{prop:truncation-bound}, and \ref{prop:inside}); they are not theoretical contributions, do not provide end-to-end generation guarantees, and are included only to motivate and verify the design choice.

\section{Acknowledgments}
This work was supported by Alibaba Group through Alibaba Research Intern Program.

\clearpage
\bibliographystyle{plainnat}
\bibliography{references}

\clearpage
\clearpage
\beginappendix

\section{Full Forward Pass}
\label{app:algorithm}

\Cref{alg:forward} gives the complete RoLA attention forward pass for a single block, combining the
sparse branch, the rotary-positioned low-rank branch, and the distribution-aware gated fusion of
\cref{sec:method}. All new parameters are jointly optimized with the backbone.

\begin{algorithm}[h]
\caption{RoLA attention forward pass (single block, per head $h$).}
\label{alg:forward}
\begin{algorithmic}[1]
\Require hidden states $X\in\R^{L\times d_{\mathrm{model}}}$; attention module with projections
$W_q,W_k,W_v$, pre-attention norms $\mathrm{norm}_q,\mathrm{norm}_k$, output projection $W_o$; new
parameters $\Pq,\Pk\in\R^{\rr\times\dd}$, gate $W_g\in\R^{d_{\mathrm{model}}\times1}$, gate bias $b_g$;
full rotary schedule $\bR(\cdot)$; rank-truncated schedule $\bR_{\rr}(\cdot)$; sparse engine
$\mathrm{SparseAttn}$ (e.g., VMoBA/SLA) with topk ratio
\Ensure attention output $O\in\R^{L\times d_{\mathrm{model}}}$
\State $Q,K,V \gets W_q X,\; W_k X,\; W_v X$ \Comment{QKV projection}
\State $Q^{\mathrm{n}},K^{\mathrm{n}} \gets \mathrm{norm}_q(Q),\; \mathrm{norm}_k(K)$
\Comment{pre-RoPE normalized query/key, reshaped to $[L,\dd]$ per head}
\Statex \textbf{-- Sparse branch (high-energy spikes) --}
\State $Q^{\mathrm{r}},K^{\mathrm{r}} \gets \bR(Q^{\mathrm{n}}),\; \bR(K^{\mathrm{n}})$
\Comment{full-dimensional RoPE on the main path}
\State $O^{\mathrm{s}} \gets \mathrm{SparseAttn}(Q^{\mathrm{r}},K^{\mathrm{r}},V)$
\Comment{$\bigo\!\big(L^2(1{-}S)\dd\big)$}
\Statex \textbf{-- Low-rank branch (smooth global background), RoPE outside activation --}
\State $\qlr \gets \bR_{\rr}\!\big(\SiLU(Q^{\mathrm{n}}\Pq^\top)\big)$,\quad
       $\klr \gets \bR_{\rr}\!\big(\SiLU(K^{\mathrm{n}}\Pk^\top)\big)$
\Comment{activate, then rotate; $\qlr,\klr\in\R^{L\times\rr}$}
\State $C \gets \klr^\top V\in\R^{\rr\times\dd}$
\Comment{global context, computed once per head: $\bigo(L\rr\dd)$}
\State $O^{\mathrm{lr}} \gets \RMSNorm\!\big(\qlr\, C\big)$
\Comment{per-query readout: $\bigo(L\rr\dd)$}
\Statex \textbf{-- Distribution-aware gated fusion --}
\State $g \gets \sigmoidf\!\big(X W_g + b_g\big)\in(0,1)^{L\times1}$
\Comment{token-wise scalar gate}
\State $r \gets \sqrt{\tfrac{1}{\dd}\textstyle\sum_i (O^{\mathrm{s}}_{:,i})^2+\epsilon}$
\Comment{per-token RMS of the sparse output}
\State $O^{\mathrm{fused}} \gets \big(O^{\mathrm{s}}/r + g\, O^{\mathrm{lr}}\big)\,r$
\Comment{RMS-normalized gated combination, Eq.~\ref{eq:gate}}
\State $O \gets W_o\,\mathrm{concat}_h\!\big(O^{\mathrm{fused}}\big)$
\Comment{merge heads, output projection}
\State \Return $O$
\end{algorithmic}
\end{algorithm}

\section{Formal Sanity Checks and Design Rationale}
\label{app:proofs}

This appendix collects simple formal checks that motivate and sanity-check the design of RoLA. We do not present them as novel theoretical contributions or end-to-end performance guarantees. \Cref{prop:exact} is a \emph{structural sanity check}: by a direct application of the RoPE group property, the outside-activation branch satisfies shift-invariance and $\bigo(L)$ reuse; its role is to verify that this ordering meets the two desired structural requirements simultaneously. \Cref{prop:truncation-bound} is an \emph{idealized design intuition}: it bounds the truncation error in a nested full-dimensional idealization. Because $\mathbf{P}_k,\mathbf{P}_q$ are learned directly in the low-rank space, this intuition motivates rather than guarantees the trained branch, whose fidelity is measured empirically (\cref{sec:trunc}). \Cref{prop:inside} is a \emph{structural obstruction argument}: inside-activation SiLU admits no \emph{exact} relative factorization; this rules out exactness, not approximate learnability, and whether a trained inside branch approximates relative behavior is an empirical question answered in \cref{sec:mechanism}.

Throughout this appendix, $\bR_{\rr}(p)\in\R^{\rr\times \rr}$ denotes the 3D RoPE rotation restricted to the
first $\rr$ rotary coordinates in the backbone's native coordinate ordering, i.e., the block-diagonal matrix formed by the retained complete 2D rotary pairs,
split across the three spatiotemporal axes. This notation only specifies the retained coordinate indices and does not make any frequency-ordering assumption. It is orthogonal,

$$
\bR_{\rr}(p)^\top\bR_{\rr}(p)=I,
$$
and satisfies the group property
$$
\bR_{\rr}(p)^\top\bR_{\rr}(q)=\bR_{\rr}(q-p),
$$
which is inherited blockwise from the 2D rotation identity
$$
\mathrm{Rot}(\theta p)^\top\mathrm{Rot}(\theta q)=\mathrm{Rot}(\theta(q-p)).
$$

\subsection{Relative-Position Structure of the Outside-Activation Branch}
\label{app:exact}

The following sanity check verifies the main reason for applying RoPE after the low-rank nonlinearity: if the nonlinear feature map is position-free, the rotary rotation can be moved into a relative offset without destroying the reusable linear aggregation. The proof is a direct application of the orthogonality and group property of RoPE. We emphasize that this is a structural sanity check rather than a novel theoretical result: it confirms that the outside ordering simultaneously satisfies the two requirements that \cref{prop:inside} shows cannot hold jointly under the inside ordering.

\begin{proposition}[Structural sanity check: relative-position structure of the outside-activation branch]

\label{prop:exact}
Let $\phi(\cdot)=\SiLU(\Pq\,\cdot)$ and $\psi(\cdot)=\SiLU(\Pk\,\cdot)$ be the pointwise low-rank feature
maps. With RoPE applied \emph{outside} the activation as in \cref{eq:qklr}, the low-rank pairwise
interaction is a function of the relative offset:
\begin{equation}
\qlr_p^\top\klr_q
=
\big(\bR_{\rr}(p)\phi(q_p)\big)^\top
\big(\bR_{\rr}(q)\psi(k_q)\big)
=
\phi(q_p)^\top\,\bR_{\rr}(q-p)\,\psi(k_q).
\end{equation}
Meanwhile, the global context $C=\sum_q\klr_q v_q^\top$ remains a single reusable sum. Hence the branch is
simultaneously shift-invariant and $\bigo(L)$.
\end{proposition}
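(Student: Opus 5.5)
The argument splits into three parts: the pairwise identity, the shift-invariance it implies, and the reuse and cost of the summary $C$.

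\emph{Pairwise identity.} By \cref{eq:qklr}, $\qlr_p=\bR_{\rr}(p)\phi(q_p)$ and $\klr_q=\bR_{\rr}(q)\psi(k_q)$. Here $\phi,\psi$ act only on the content vectors and take no positional argument, because the rotation is applied after $\SiLU$. Transposing the first factor gives
$$
\qlr_p^\top\klr_q=\phi(q_p)^\top\bR_{\rr}(p)^\top\bR_{\rr}(q)\,\psi(k_q).
$$
The group property stated at the start of this appendix then replaces $\bR_{\rr}(p)^\top\bR_{\rr}(q)$ by $\bR_{\rr}(q-p)$. That property itself needs a short justification. $\bR_{\rr}(p)$ is block-diagonal, with $2\times2$ blocks $\mathrm{Rot}(\theta_j p_{a(j)})$, where $a(j)$ is the spatiotemporal axis assigned to pair $j$. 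Truncating to the first $\rr$ coordinates keeps only complete pairs, so this block structure survives. The identity $\mathrm{Rot}(\alpha)^\top\mathrm{Rot}(\beta)=\mathrm{Rot}(\beta-\alpha)$ then holds blockwise, and since the angle in each block is linear in the axis coordinate, $\beta-\alpha=\theta_j\big(q_{a(j)}-p_{a(j)}\big)$.

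\emph{Shift-invariance.} Under a translation $(p,q)\mapsto(p+\delta,q+\delta)$ with the content vectors held fixed, the right-hand side depends on position only through $q-p$, so the logit is unchanged. I will state this explicitly as the precise meaning of ``shift-invariant'' here: the interaction kernel, for fixed content, depends on position only through the offset. It is not a claim about the trained outputs.

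\emph{Reuse and cost.} Linearity gives $\qlr_p^\top C=\sum_q(\qlr_p^\top\klr_q)\,v_q^\top$. So the readout is exactly the aggregation of the relative-offset logits above, while $C$ depends on no query. I will then count the cost:
\begin{itemize}
\item forming each $\klr_q$ costs $\bigo(\rr\dd)$ for the projection plus $\bigo(\rr)$ for the rotation;
\item accumulating $C$ costs $\bigo(L\rr\dd)$ in total;
\item each readout $\qlr_p^\top C$ costs $\bigo(\rr\dd)$.
\end{itemize}
The total is $\bigo(L\rr\dd)$, which is linear in $L$.

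None of these steps is hard. The one point needing care is that truncation keeps only whole rotary pairs, so that $\bR_{\rr}$ is still orthogonal with the group property; the paper's assumption that $\rr/2$ complete pairs are retained guarantees this.
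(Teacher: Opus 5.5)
Your proposal is correct and takes essentially the same route as the paper's proof. You transpose and apply orthogonality together with the blockwise group property to obtain $\phi(q_p)^\top\bR_{\rr}(q-p)\psi(k_q)$, then note that $C$ does not depend on the query and count the total cost as $\bigo(L\rr\dd)$; your additions (justifying the group property from complete rotary pairs, and the identity $\qlr_p^\top C=\sum_q(\qlr_p^\top\klr_q)\,v_q^\top$ linking the readout to the relative logits) are sound refinements of that same argument.
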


\begin{proof}
Write the low-rank features in \cref{eq:qklr} as
$$
\qlr_p=\bR_{\rr}(p)\phi(q_p),
\qquad
\klr_q=\bR_{\rr}(q)\psi(k_q),
$$
where
$$
\phi(q_p)=\SiLU(\Pq q^{\mathrm{n}}_p),
\qquad
\psi(k_q)=\SiLU(\Pk k^{\mathrm{n}}_q)
$$
are position-free: they depend only on token content, not on the indices $p$ or $q$.

\emph{Step 1: relative interaction.}
For any query--key pair,
$$
\qlr_p^\top\klr_q
=
\big(\bR_{\rr}(p)\phi(q_p)\big)^\top
\big(\bR_{\rr}(q)\psi(k_q)\big)
=
\phi(q_p)^\top\,\bR_{\rr}(p)^\top\bR_{\rr}(q)\,\psi(k_q)
=
\phi(q_p)^\top\,\bR_{\rr}(q-p)\,\psi(k_q),
$$
where the last equality uses orthogonality and the group property. The right-hand side depends on the
positions only through the relative offset $q-p$, establishing shift-invariance of the low-rank logit.

\emph{Step 2: $\bigo(L)$ associativity is preserved.}
The global context is
$$
C=\sum_{q=1}^{L}\klr_q v_q^\top
=
\sum_{q=1}^{L}\bR_{\rr}(q)\psi(k_q)v_q^\top
\in\R^{\rr\times \dd}.
$$
This is a single sum that does not depend on the query index $p$; it can therefore be computed once at cost
$\bigo(L\rr \dd)$. The per-query output
$$
O^{\mathrm{lr}}_p=\RMSNorm(\qlr_p^\top C)
$$
costs $\bigo(\rr \dd)$ per token, so the branch is $\bigo(L\rr \dd)$ overall. The rotation $\bR_{\rr}(q)$ is
absorbed into $C$ before the sum, and the query-side rotation $\bR_{\rr}(p)$ is applied after $C$; neither
obstructs associativity, because both are linear operators acting outside the position-free feature maps.
This is what simultaneously yields relative-position structure and linear cost.
\end{proof}

\subsection{Truncation-Error Intuition for the Rotary Low-Rank Branch}
\label{app:truncation-bound}

\paragraph{Notation and idealized nested view.}
Let \(\bR(p)\in\R^{\dd\times\dd}\) be the full 3D RoPE matrix, block-diagonal with \(2\times2\) rotation
blocks. Let \(\rr=2s\le \dd\) be an even truncation rank, and assume that the first \(\rr\) coordinates in the backbone's native ordering correspond to the first \(s\) complete \(2\times2\) blocks.
Define
\[
P_{\le\rr}=\operatorname{diag}(I_{\rr},\;0_{(\dd-\rr)\times(\dd-\rr)}),
\qquad
P_{>\rr}=I_{\dd}-P_{\le\rr}.
\]
For a vector \(u\in\R^{\dd}\), write
\[
u_{\le\rr}=P_{\le\rr}u,\qquad u_{>\rr}=P_{>\rr}u.
\]
The truncated RoPE matrix is
\[
\bR_{\rr}(p)=P_{\le\rr}\bR(p)P_{\le\rr}.
\]

To make the truncation intuition precise, we consider an idealized nested setting. Let \(\Phi(q_p)\) and
\(\Psi(k_q)\) be position-free full-dimensional feature maps, and suppose the rank-\(\rr\) branch retains
the first \(\rr\) rotary coordinates of these maps in the backbone's native ordering. In the actual implementation, \(\Pq\) and \(\Pk\) are
learned directly in the low-dimensional space. Therefore, the following statement is not a theoretical
contribution and should not be read as a strict guarantee for the trained branch; it is an idealized
design intuition. The empirical operator-fidelity results in the main text are the operative evidence.

\begin{assumption}[Concentration in the retained rotary subspace]
\label{ass:rotfreq}
For the global background branch of a pre-trained video DiT, the residual feature components outside the retained rotary subspace are small.
Specifically, there exist constants \(\epsilon_\Phi(\rr),\epsilon_\Psi(\rr)>0\) such that
\[
\sup_{p}\left\|P_{>\rr}\Phi(q_p)\right\|_2
\le \epsilon_\Phi(\rr),
\qquad
\sup_{q}\left\|P_{>\rr}\Psi(k_q)\right\|_2
\le \epsilon_\Psi(\rr).
\]
For 3D-RoPE video DiTs, this concentration is consistent with the sparse-plus-smooth background decomposition
identified in the low-rank Fourier compensation analysis of RoPeSLR~\citep{liu2026ropeslr}. RoPE uses a geometrically spaced rotary frequency schedule~\citep{su2021roformer}; our implementation retains the first \(\rr\) rotary coordinates in the backbone's native ordering, and this choice is justified by the empirical fidelity sweep rather than by a frequency-ordering assumption. Fig.~\ref{fig:truncation} validates the sufficiency of this retained subspace at
\(\rr=64\).
\end{assumption}

\begin{proposition}[Idealized truncation intuition for a nested rotary feature map]
\label{prop:truncation-bound}
Define the full and rank-\(\rr\) truncated pairwise scores as
\[
s_{\mathrm{full}}(p,q)
=
\Phi(q_p)^\top \bR(q-p)\Psi(k_q),
\]
and
\[
s_{\rr}(p,q)
=
\Phi_{\le\rr}(q_p)^\top
\bR_{\rr}(q-p)
\Psi_{\le\rr}(k_q).
\]
Then for all tokens \(p,q\),
\[
\left|s_{\mathrm{full}}(p,q)-s_{\rr}(p,q)\right|
\le
\left\|P_{>\rr}\Phi(q_p)\right\|_2
\left\|P_{>\rr}\Psi(k_q)\right\|_2.
\]
In particular, under Assumption~\ref{ass:rotfreq},
\[
\left|s_{\mathrm{full}}(p,q)-s_{\rr}(p,q)\right|
\le
\epsilon_\Phi(\rr)\,\epsilon_\Psi(\rr).
\]
\end{proposition}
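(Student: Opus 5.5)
The plan is to exploit the block-diagonal structure of the full rotation and split both feature vectors into retained and discarded parts. The key structural fact is that \(\bR(\Delta)\) is block-diagonal with \(2\times2\) rotation blocks, and that the first \(\rr=2s\) coordinates correspond exactly to the first \(s\) complete blocks, as assumed in the nested view. Hence \(\bR(\Delta)\) commutes with \(P_{\le\rr}\) (and so with \(P_{>\rr}\)) for every offset \(\Delta\). In particular the cross blocks vanish:
\[
P_{\le\rr}\bR(\Delta)P_{>\rr}=0,
\qquad
P_{>\rr}\bR(\Delta)P_{\le\rr}=0.
\]
I would record this first, since it is the only place where the "complete rotary pairs" hypothesis enters. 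If a \(2\times2\) block straddled index \(\rr\), the cross terms would not vanish and the bound would fail.

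Next I would fix \(p,q\), set \(\Delta=q-p\), and write
\[
\Phi=\Phi_{\le\rr}+\Phi_{>\rr},
\qquad
\Psi=\Psi_{\le\rr}+\Psi_{>\rr},
\]
with \(\Phi=\Phi(q_p)\) and \(\Psi=\Psi(k_q)\). Expanding \(s_{\mathrm{full}}(p,q)=\Phi^\top\bR(\Delta)\Psi\) gives four terms. The two mixed terms vanish by the commutation property. The retained term is
\[
\Phi_{\le\rr}^\top\bR(\Delta)\Psi_{\le\rr}
=\Phi_{\le\rr}^\top P_{\le\rr}\bR(\Delta)P_{\le\rr}\Psi_{\le\rr}
=\Phi_{\le\rr}^\top\bR_{\rr}(\Delta)\Psi_{\le\rr}
=s_{\rr}(p,q).
\]
This uses the definition \(\bR_{\rr}=P_{\le\rr}\bR P_{\le\rr}\), and is equally valid if one identifies the retained vectors with their \(\R^{\rr}\) restrictions. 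Therefore
\[
s_{\mathrm{full}}(p,q)-s_{\rr}(p,q)=\Phi_{>\rr}^\top\bR(\Delta)\Psi_{>\rr}.
\]

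Finally, I would bound this residual by Cauchy--Schwarz, using orthogonality of \(\bR(\Delta)\) (operator norm \(1\), since \(\|\bR(\Delta)\Psi_{>\rr}\|_2=\|\Psi_{>\rr}\|_2\)). This gives
\[
\bigl|s_{\mathrm{full}}(p,q)-s_{\rr}(p,q)\bigr|\le\|P_{>\rr}\Phi(q_p)\|_2\,\|P_{>\rr}\Psi(k_q)\|_2 .
\]
Taking suprema and invoking Assumption~\ref{ass:rotfreq} yields the uniform bound \(\epsilon_\Phi(\rr)\,\epsilon_\Psi(\rr)\).

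The main obstacle is not computational but the careful justification of the block-alignment step, i.e., that truncation in the native 3D ordering never splits a rotary pair. I would state it explicitly as a consequence of the nested-view hypothesis. Everything else is routine linear algebra.
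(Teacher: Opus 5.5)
Your proposal is correct and matches the paper's proof essentially step for step. The paper likewise uses that $P_{\le\rr}$ selects whole $2\times 2$ blocks, so the retained and discarded subspaces are $\bR$-invariant; it then kills the two cross terms, identifies the retained term with $s_{\rr}(p,q)$, and bounds the discarded--discarded residual by Cauchy--Schwarz and orthogonality of $\bR(q-p)$ before invoking Assumption~\ref{ass:rotfreq}.
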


\begin{proof}
Because \(P_{\le\rr}\) selects an integer number of complete \(2\times2\) RoPE blocks, the subspaces
\[
U_{\le\rr}=\operatorname{range}(P_{\le\rr}),\qquad
U_{>\rr}=\operatorname{range}(P_{>\rr})
\]
are invariant under \(\bR(p)\). Therefore,
\[
P_{\le\rr}\bR(p)P_{>\rr}=0,\qquad
P_{>\rr}\bR(p)P_{\le\rr}=0.
\]
In other words, \(\bR(p)\) preserves the split between the retained rotary subspace and the discarded rotary subspace.

Now decompose the features as
\[
\Phi(q_p)=\Phi_{\le\rr}(q_p)+\Phi_{>\rr}(q_p),\qquad
\Psi(k_q)=\Psi_{\le\rr}(k_q)+\Psi_{>\rr}(k_q).
\]
Then
\[
\begin{aligned}
s_{\mathrm{full}}(p,q)
&=
\left(\Phi_{\le\rr}(q_p)+\Phi_{>\rr}(q_p)\right)^\top
\bR(q-p)
\left(\Psi_{\le\rr}(k_q)+\Psi_{>\rr}(k_q)\right) \\
&=
\Phi_{\le\rr}(q_p)^\top
\bR(q-p)
\Psi_{\le\rr}(k_q)
+
\Phi_{\le\rr}(q_p)^\top
\bR(q-p)
\Psi_{>\rr}(k_q)
\\
&\quad+
\Phi_{>\rr}(q_p)^\top
\bR(q-p)
\Psi_{\le\rr}(k_q)
+
\Phi_{>\rr}(q_p)^\top
\bR(q-p)
\Psi_{>\rr}(k_q).
\end{aligned}
\]
By the invariance of \(U_{\le\rr}\) and \(U_{>\rr}\), the two cross terms vanish:
\[
\Phi_{\le\rr}(q_p)^\top \bR(q-p)\Psi_{>\rr}(k_q)=0,
\qquad
\Phi_{>\rr}(q_p)^\top \bR(q-p)\Psi_{\le\rr}(k_q)=0.
\]
Hence
\[
s_{\mathrm{full}}(p,q)
=
\Phi_{\le\rr}(q_p)^\top
\bR(q-p)
\Psi_{\le\rr}(k_q)
+
\Phi_{>\rr}(q_p)^\top
\bR(q-p)
\Psi_{>\rr}(k_q).
\]
On the first term, \(\Phi_{\le\rr}\) and \(\Psi_{\le\rr}\) live only on the first \(\rr\) coordinates, so

\[
\Phi_{\le\rr}(q_p)^\top \bR(q-p)\Psi_{\le\rr}(k_q)
=
\Phi_{\le\rr}(q_p)^\top \bR_{\rr}(q-p)\Psi_{\le\rr}(k_q)
=
s_{\rr}(p,q).
\]
Therefore,
\[
s_{\mathrm{full}}(p,q)-s_{\rr}(p,q)
=
\Phi_{>\rr}(q_p)^\top
\bR(q-p)
\Psi_{>\rr}(k_q).
\]
Since \(\bR(q-p)\) is orthogonal,
\[
\left|s_{\mathrm{full}}(p,q)-s_{\rr}(p,q)\right|
=
\left|
\Phi_{>\rr}(q_p)^\top
\bR(q-p)
\Psi_{>\rr}(k_q)
\right|.
\]
By Cauchy--Schwarz,
\[
\left|s_{\mathrm{full}}(p,q)-s_{\rr}(p,q)\right|
\le
\left\|\Phi_{>\rr}(q_p)\right\|_2
\left\|\bR(q-p)\Psi_{>\rr}(k_q)\right\|_2
=
\left\|\Phi_{>\rr}(q_p)\right\|_2
\left\|\Psi_{>\rr}(k_q)\right\|_2.
\]
The last equality uses orthogonality of \(\bR(q-p)\).

Finally, under Assumption~\ref{ass:rotfreq}, the right-hand side is bounded by
\(\epsilon_\Phi(\rr)\epsilon_\Psi(\rr)\), which gives the stated bound.
\end{proof}

\subsection{Obstruction for Inside-Activation Rotary Factorization}
\label{app:inside}

The following obstruction argument explains why applying RoPE before the pointwise nonlinearity is structurally unfavorable: a pointwise nonlinearity does not commute with a rotation, so absolute position can become entangled with content before global aggregation. This argument is intended as a design motivation; it rules out exact factorization, not approximate learnability.

\begin{proposition}[No exact relative factorization under inside-activation \(\SiLU\)]
\label{prop:inside}
Let \(\rr\ge 2\), and let \(\sigma=\SiLU\) be applied coordinatewise. Define the inside-activation low-rank
features
\[
\qlr^{\prime}_p=\sigma\bigl(\bR_{\rr}(p)\Pq q^{\mathrm{n}}_p\bigr),
\qquad
\klr^{\prime}_q=\sigma\bigl(\bR_{\rr}(q)\Pk k^{\mathrm{n}}_q\bigr).
\]
Then, in the idealized continuous-position setting, there do not exist functions \(g,h\) such that for all
\(p,q\in\R^3\) and all content vectors \(q^{\mathrm{n}}_p,k^{\mathrm{n}}_q\in\R^{\rr}\),
\[
(\qlr^{\prime}_p)^\top\klr^{\prime}_q
=
g(q^{\mathrm{n}}_p,k^{\mathrm{n}}_q)\,h(q-p).
\]
In particular, if such a factorization existed, the global context
\[
C'=\sum_q\klr^{\prime}_q v_q^\top
\]
would be shift-invariant; the proposition shows that this is impossible.
\end{proposition}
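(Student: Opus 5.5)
We argue by contradiction. Suppose functions \(g,h\) satisfy \((\qlr^{\prime}_p)^\top\klr^{\prime}_q=g(q^{\mathrm{n}}_p,k^{\mathrm{n}}_q)\,h(q-p)\) for all positions and contents. The plan is to exhibit one content pair and two position pairs with the same offset whose inner products differ. Since the right-hand side depends on positions only through \(q-p\), any such example is a contradiction. We therefore only need the interaction \(F(p,q;a,b)=\sigma(\bR_{\rr}(p)a)^\top\sigma(\bR_{\rr}(q)b)\), with \(a=\Pq q^{\mathrm{n}}_p\) and \(b=\Pk k^{\mathrm{n}}_q\), to fail to be invariant under a common translation \((p,q)\mapsto(p+\delta,q+\delta)\).

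To keep the computation explicit, we first reduce to a single \(2\times2\) rotary block. We pick \(a,b\) supported on the first retained pair, which requires \(\Pq,\Pk\) to have that pair in their range. In the idealized setting we take \(\Pq,\Pk\) to contain the identity on that pair, so that the content vectors range over all of \(\R^{\rr}\). With this choice the remaining coordinates contribute \(\sigma(0)^2=0\). Next we choose positions differing only along the axis of that block, with nonzero frequency \(\theta\). The rotation is then \(\mathrm{Rot}(\theta p)\) acting on \(\R^2\). We take \(a=b=(c,0)\) and \(q=p\), i.e. zero offset, and set \(\alpha=\theta p\). The interaction becomes
\[
f(\alpha)=\sigma(c\cos\alpha)^2+\sigma(c\sin\alpha)^2 .
\]
Shift-invariance at zero offset would force \(f\) to be constant in \(\alpha\).

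It remains to show \(f\) is not constant, which we do by evaluating two points. At \(\alpha=0\) we get \(f(0)=\sigma(c)^2\), since \(\sigma(0)=0\). At \(\alpha=\pi\) we get \(f(\pi)=\sigma(-c)^2\). For \(c>0\) we have \(\sigma(c)=c\,\mathrm{sigmoid}(c)>c/2\) and \(|\sigma(-c)|=c\,\mathrm{sigmoid}(-c)<c/2\), so \(f(0)\neq f(\pi)\). This evenness failure of SiLU is the non-commutation of the nonlinearity with rotation. Because positions are continuous, \(\alpha=\pi\) is realized by \(p=\pi/\theta\). We then have \(F(0,0;a,b)\neq F(p,p;a,b)\), while \(h(0)\,g(a,b)\) is the same in both cases. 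This contradicts the assumed factorization.

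The main obstacle is the reduction step, not the inequality. We must make sure that restricting \(a,b\) to one rotary pair is legitimate given that the content enters through \(\Pq,\Pk\). One option is to assume, as the statement's quantification "for all content vectors in \(\R^{\rr}\)" suggests, that \(\Pq,\Pk\) are invertible on the relevant pair. Alternatively, any content with a nonzero projection on some rotary pair can be used, replacing two-point evaluation by the non-constancy of the analytic function \(\alpha\mapsto F\). For the final remark about \(C'\), we note that a shift-invariant context would make the logits relative, so it is ruled out by the same counterexample.
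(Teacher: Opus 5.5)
Your proof is correct, and it has the same skeleton as the paper's. Both argue by contradiction and set $q=p$ with equal content, so the factorization would force $\|\sigma(\bR_{\rr}(p)x)\|_2^2$ to be independent of $p$. Both then restrict to a single nonzero-frequency $2\times2$ block with $x=(c,0)$, which requires $F(t)=\sigma(c\cos t)^2+\sigma(c\sin t)^2$ to be constant.

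The difference is how you show $F$ is not constant:
\begin{itemize}
\item The paper works locally at $t=0$. It computes $F''(0)=-2c\,\sigma(c)\sigma'(c)+c^2/2$ and demands that this vanish for every $c>0$. Integrating gives $\sigma(c)^2=c^2/4$, which fails at $c=1$.
\item You compare $t=0$ with $t=\pi$, using $\sigma(c)=c\,\mathrm{sigmoid}(c)>c/2>c\,\mathrm{sigmoid}(-c)=|\sigma(-c)|$.
\end{itemize}

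Your route is more elementary: it needs no derivatives or integration, and a single $c>0$ suffices. It also isolates the actual obstruction, namely that $\SiLU^2$ is not even. The paper's computation uses only rotations near the identity, so it does not need the angle $\pi$ to be reachable. It also shows what constancy would force on the activation.

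You are more careful than the paper on one point. The paper's proof simply writes arbitrary $x,y\in\R^{\rr}$ in place of $\Pq q^{\mathrm{n}}_p$ and $\Pk k^{\mathrm{n}}_q$. Some nondegeneracy of the projections is genuinely needed: for $\Pq=0$ both sides vanish identically with $g\equiv0$, so the factorization exists. Your explicit assumption that the projections reach a single rotary pair is therefore the right one.

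Your fallback via analyticity for general content is not fully justified as stated. When several blocks on the same axis are occupied, they rotate together at different frequencies, and the sum's non-constancy then needs a further argument.
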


\begin{proof}
We prove this by contradiction.

Assume that such a factorization exists:
\[
\sigma\bigl(\bR_{\rr}(p)x\bigr)^\top
\sigma\bigl(\bR_{\rr}(q)y\bigr)
=
g(x,y)\,h(q-p)
\quad
\forall p,q\in\R^3,\quad\forall x,y\in\R^{\rr}.
\]

\noindent\textbf{Step 1: Norm condition from the factorization.}
Set \(q=p\) and \(y=x\). Then the left-hand side becomes
\[
\sigma\bigl(\bR_{\rr}(p)x\bigr)^\top
\sigma\bigl(\bR_{\rr}(p)x\bigr)
=
\left\|\sigma\bigl(\bR_{\rr}(p)x\bigr)\right\|_2^2,
\]
while the right-hand side becomes
\[
g(x,x)\,h(0),
\]
which is independent of \(p\). Hence for every fixed \(x\),
\[
\left\|\sigma\bigl(\bR_{\rr}(p)x\bigr)\right\|_2^2
\]
is constant in \(p\).

\noindent\textbf{Step 2: Restrict to a single 2D rotary block.}
Since \(\bR_{\rr}(p)\) is block-diagonal, it suffices to consider a single \(2\times2\) rotary block with
non-zero frequency. Let \(t\) denote the rotation angle of that block. Choose \(x\) such that only this
block is non-zero, and within that block,
\[
x=
\begin{pmatrix}
a\\
0
\end{pmatrix},
\qquad a>0.
\]
Since \(\sigma(0)=0\) for \(\SiLU\), all other blocks contribute zero to the norm squared.

Thus we obtain
\[
\left\|\sigma\bigl(\bR_{\rr}(p)x\bigr)\right\|_2^2
=
\sigma(a\cos t)^2
+
\sigma(a\sin t)^2.
\]
By Step 1, this quantity must be independent of \(t\). Define
\[
F(t)
=
\sigma(a\cos t)^2
+
\sigma(a\sin t)^2.
\]
Then \(F(t)\) is constant in \(t\) for every \(a>0\).

\noindent\textbf{Step 3: Compute the second derivative at \(t=0\).}
Let
\[
G(u)=\sigma(u)^2.
\]
For \(u=a\cos t\) and \(v=a\sin t\), we have
\[
F(t)=G(u)+G(v).
\]
At \(t=0\),
\[
u(0)=a,\quad u'(0)=0,\quad u''(0)=-a,
\]
and
\[
v(0)=0,\quad v'(0)=a,\quad v''(0)=0.
\]
Since
\[
F''(t)
=
G''(u)(u')^2
+
G'(u)u''
+
G''(v)(v')^2
+
G'(v)v'',
\]
substituting \(t=0\) gives
\[
F''(0)
=
G''(a)\cdot 0
+
G'(a)\cdot(-a)
+
G''(0)\cdot a^2
+
G'(0)\cdot 0.
\]
Thus
\[
F''(0)
=
-aG'(a)
+
a^2G''(0).
\]

\noindent\textbf{Step 4: Use the derivatives of \(\SiLU\).}
For \(\SiLU\),
\[
\sigma(0)=0,\qquad \sigma'(0)=\frac12,\qquad \sigma''(0)=\frac12.
\]
Since \(G(u)=\sigma(u)^2\), we have
\[
G'(u)=2\sigma(u)\sigma'(u),
\]
and
\[
G''(u)
=
2\left(\sigma'(u)^2+\sigma(u)\sigma''(u)\right).
\]
In particular,
\[
G''(0)
=
2\left(\sigma'(0)^2+\sigma(0)\sigma''(0)\right)
=
2\left(\frac14+0\right)
=
\frac12.
\]
Therefore,
\[
F''(0)
=
-2a\sigma(a)\sigma'(a)
+
\frac{a^2}{2}.
\]

\noindent\textbf{Step 5: Contradiction from constancy of \(F(t)\).}
Since \(F(t)\) is constant in \(t\), we must have
\[
F''(0)=0.
\]
Hence
\[
-2a\sigma(a)\sigma'(a)
+
\frac{a^2}{2}
=
0.
\]
For \(a>0\), this implies
\[
\sigma(a)\sigma'(a)
=
\frac{a}{4}.
\]
Now observe that
\[
\frac{d}{da}\left(\sigma(a)^2\right)
=
2\sigma(a)\sigma'(a).
\]
Thus
\[
\frac{d}{da}\left(\sigma(a)^2\right)
=
\frac{a}{2}.
\]
Integrating from \(0\) to \(a\) and using \(\sigma(0)=0\), we obtain
\[
\sigma(a)^2
=
\int_0^a \frac{t}{2}\,dt
=
\frac{a^2}{4}.
\]
Therefore, for all \(a>0\),
\[
\sigma(a)^2
=
\frac{a^2}{4}.
\]
In particular, for \(a=1\),
\[
\sigma(1)^2
=
\frac14.
\]
However, for \(\SiLU\),
\[
\sigma(1)
=
\mathrm{SiLU}(1)
=
\frac{1}{1+e^{-1}},
\]
and
\[
\left(\frac{1}{1+e^{-1}}\right)^2
\neq
\frac14,
\]
because \(\frac{1}{1+e^{-1}}\neq \frac12\). This is a contradiction.

Therefore, the assumed factorization cannot exist. Hence an inside-activation rotation cannot yield a purely
relative-position low-rank kernel.
\end{proof}
\paragraph{What the obstruction does and does not say.}
\Cref{prop:inside} rules out an \emph{exact} relative factorization; it does not claim that inside
activation cannot be trained. A trained inside branch may approximate relative behavior, and the
proposition predicts that any such approximation must carry residual absolute-position dependence.
\Cref{tab:mechanism} confirms exactly this regime: the inside branch remains position-aware (offset
range $0.47$) yet has shift residual $3.7\times10^{-1}$, whereas the outside branch realizes the exact
relative structure (shift residual $1.8\times10^{-5}$).

\section{Additional Wan2.2 and VBench 2.0 Results}
\label{app:wan22}

Table~\ref{tab:wan22} extends the main comparison to Wan2.2-T2V, evaluated under the official
recommended setting with \(1280\times720\) resolution, 81 frames, and 40 denoising steps.
Table~\ref{tab:vbench2} further evaluates RoLA against the corresponding full-attention
baseline on Wan2.1-14B and Wan2.2-A14B-T2V using selected VBench 2.0~\citep{zheng2025vbench}
dimensions. We report Dynamic Spatial Relationship (DSR), Instance Preservation (IP),
Multi-View Consistency (MVC), Motion Rationality (MR), Motion Order Understanding (MOU),
and Complex Landscape (CL), which emphasize long-range relational, geometric, temporal,
and complex-scene consistency that can be affected by aggressive attention sparsification.

\begin{table}[h]
\centering
\small
\renewcommand{\arraystretch}{1.08}
\setlength{\tabcolsep}{4.5pt}
\caption{Additional quantitative comparison on Wan2.2-T2V benchmarks. The layout mirrors the main
quantitative table. All measurements use NVIDIA H100 80G GPUs, and latency refers to the end-to-end
runtime of the DiT denoising component (excluding text encoding and VAE stages). For the Wan2.2 MoE
architecture, this latency measures the denoising computation only and does not include the
high-noise/low-noise expert switching time.}
\label{tab:wan22}
\begin{tabular}{llccccccc}

\arrayrulecolor{tridentRed}
\toprule

\rowcolor{tridentHeader}
& & \multicolumn{5}{c}{\textbf{VBench Quality}} & \multicolumn{2}{c}{\textbf{Efficiency}} \\

\arrayrulecolor{tridentRed}
\hhline{~~-------}

\rowcolor{tridentHeader}
\multirow{-2}{*}{\textbf{Model}} & \multirow{-2}{*}{\textbf{Method}} & \textbf{SC}$\uparrow$ & \textbf{MS}$\uparrow$ & \textbf{BC}$\uparrow$ & \textbf{IQ}$\uparrow$ & \textbf{AQ}$\uparrow$ & \textbf{Latency}$\downarrow$ & \textbf{Sparsity}$\uparrow$ \\

\arrayrulecolor{tridentOrange}
\midrule

\multirow{5}{*}{Wan2.2-A14B-T2V} & Full & 0.9701 & 0.9866 & 0.9629 & 0.7102 & 0.6518 & 1584.6s & $0\%$ \\
& VSA & 0.9455 & 0.9754 & 0.9497 & 0.6522 & 0.5927 & 986.8s & $90\%$ \\
& SLA & 0.9572 & 0.9812 & 0.9578 & 0.6688 & 0.6182 & 883.9s & $90\%$\\
& RoPeSLR & 0.9621 & 0.9850 & 0.9533 & 0.6702 & 0.6193 & 881.4s & $90\%$\\
& \cellcolor{tridentLight}\textcolor{tridentRed}{\textbf{RoLA (Ours)}} & \cellcolor{tridentLight}0.9645 & \cellcolor{tridentLight}0.9856& \cellcolor{tridentLight}0.9589 & \cellcolor{tridentLight}0.6915 & \cellcolor{tridentLight}0.6401 & \cellcolor{tridentLight}\best{548s} & \cellcolor{tridentLight}$90\%$ \\

\arrayrulecolor{tridentRed}
\bottomrule

\end{tabular}
\arrayrulecolor{black}
\end{table}

\begin{table}[h]
\centering
\small
\renewcommand{\arraystretch}{1.08}
\setlength{\tabcolsep}{4.5pt}
\caption{Selected VBench 2.0 dimensions on Wan2.1-14B and Wan2.2-A14B-T2V.}
\label{tab:vbench2}
\begin{tabular}{llcccccc}

\arrayrulecolor{tridentRed}
\toprule

\rowcolor{tridentHeader}
\textbf{Model} & \textbf{Method} & \textbf{DSR}$\uparrow$ & \textbf{IP}$\uparrow$ & \textbf{MVC}$\uparrow$ & \textbf{MR}$\uparrow$ & \textbf{MOU}$\uparrow$ & \textbf{CL}$\uparrow$ \\

\arrayrulecolor{tridentOrange}
\midrule

\multirow{2}{*}{Wan2.1-14B} & Full & 30.16 & 83.67 & 24.09 & 28.67 & 23.34 & 15.98 \\
& \cellcolor{tridentLight}\textcolor{tridentRed}{\textbf{RoLA (Ours)}} & \cellcolor{tridentLight}29.73 & \cellcolor{tridentLight}80.05 & \cellcolor{tridentLight}22.56 & \cellcolor{tridentLight}26.22 & \cellcolor{tridentLight}21.69 & \cellcolor{tridentLight}15.08 \\

\arrayrulecolor{tridentOrange}
\midrule

\multirow{2}{*}{Wan2.2-A14B-T2V} & Full & 41.03 & 86.82 & 32.63 & 34.54 & 28.56 & 19.02 \\
& \cellcolor{tridentLight}\textcolor{tridentRed}{\textbf{RoLA (Ours)}} & \cellcolor{tridentLight}39.26 & \cellcolor{tridentLight}83.88 & \cellcolor{tridentLight}29.94 & \cellcolor{tridentLight}31.82 & \cellcolor{tridentLight}26.12 & \cellcolor{tridentLight}18.55 \\

\arrayrulecolor{tridentRed}
\bottomrule

\end{tabular}
\arrayrulecolor{black}
\end{table}

\section{Additional Ablation Results}
\label{app:additional_ablations}

The following experiments are designed to address two potential confounds in the main comparison. First,
we fix the sparse attention engine, routing pattern, and target sparsity while changing only the global
branch. Second, we evaluate RoLA at multiple sparsity levels. Both tables report the five VBench
dimensions used in the main experiment.

\begin{table}[t]
\centering
\small
\renewcommand{\arraystretch}{1.08}
\setlength{\tabcolsep}{5.0pt}
\caption{Full-attention fine-tuning control on Wan2.1-1.3B at 480p. The first row copies the pretrained
Full Attention result from Table~\ref{tab:main}; the second row keeps dense attention and applies the same
single-stage fine-tuning protocol, adaptation data, batch size, and optimization steps as RoLA.}
\label{tab:abl_full_ft}
\begin{tabular}{lccccc}
\arrayrulecolor{tridentRed}
\toprule
\rowcolor{tridentHeader}
\textbf{Setting} & \textbf{SC}$\uparrow$ & \textbf{MS}$\uparrow$ & \textbf{BC}$\uparrow$
& \textbf{IQ}$\uparrow$ & \textbf{AQ}$\uparrow$ \\
\arrayrulecolor{tridentOrange}
\midrule
Full Attention & 0.9403 & 0.9557 & 0.9498 & 0.6510 & 0.6088 \\
Full Attention + matched fine-tuning & 0.9409 & 0.9556 & 0.9495 & 0.6508 & 0.6084 \\
\arrayrulecolor{tridentRed}
\bottomrule
\end{tabular}
\arrayrulecolor{black}
\end{table}

\begin{table}[t]
\centering
\small
\renewcommand{\arraystretch}{1.08}
\setlength{\tabcolsep}{5.0pt}
\caption{Controlled global-branch ablation on Wan2.1-1.3B at 90\% sparsity. The sparse engine and routing
pattern are fixed across rows; only the global branch is changed.}
\label{tab:abl_global}
\begin{tabular}{lccccc}
\arrayrulecolor{tridentRed}
\toprule
\rowcolor{tridentHeader}
\textbf{Global branch} & \textbf{SC}$\uparrow$ & \textbf{MS}$\uparrow$ & \textbf{BC}$\uparrow$
& \textbf{IQ}$\uparrow$ & \textbf{AQ}$\uparrow$ \\
\arrayrulecolor{tridentOrange}
\midrule
None (sparse only) & 0.8480 & 0.8912 & 0.9530 & 0.4939 & 0.5281 \\
No RoPE & 0.9301 & 0.9253 & 0.9641 & 0.5817 & 0.6082 \\
RoPE inside activation & 0.9303 & 0.9257 & 0.9639 & 0.6027 & 0.6185 \\
\rowcolor{tridentLight}
\textcolor{tridentRed}{\textbf{RoPE outside (ours)}} & 0.9521 & 0.9812 & 0.9490 & 0.6492 & 0.6068 \\
\arrayrulecolor{tridentRed}
\bottomrule
\end{tabular}
\arrayrulecolor{black}
\end{table}

\begin{table}[t]
\centering
\small
\renewcommand{\arraystretch}{1.08}
\setlength{\tabcolsep}{5.0pt}
\caption{Sparsity sweep for RoLA on Wan2.1-1.3B at 480p.}
\label{tab:abl_sparsity}
\begin{tabular}{lccccc}
\arrayrulecolor{tridentRed}
\toprule
\rowcolor{tridentHeader}
\textbf{Sparsity} & \textbf{SC}$\uparrow$ & \textbf{MS}$\uparrow$ & \textbf{BC}$\uparrow$
& \textbf{IQ}$\uparrow$ & \textbf{AQ}$\uparrow$ \\
\arrayrulecolor{tridentOrange}
\midrule
80\% & 0.9528 & 0.9820 & 0.9495 & 0.6500 & 0.6072 \\
\rowcolor{tridentLight}
\textcolor{tridentRed}{\textbf{90\% (default)}} & 0.9521 & 0.9812 & 0.9490 & 0.6492 & 0.6068 \\
95\% & 0.9503 & 0.9785 & 0.9401 & 0.6485 & 0.5929 \\
\arrayrulecolor{tridentRed}
\bottomrule
\end{tabular}
\arrayrulecolor{black}
\end{table}

\section{Training and Model Hyperparameters}
\label{app:hyper}

\subsection{Why Single-Stage Fine-Tuning Replaces the Prior Two-Stage Recipe}
\label{app:onestage}

RoPeSLR~\citep{liu2026ropeslr} used a two-stage post-training procedure because its low-rank Fourier
compensator is not a genuine linear-attention branch: it first learns a coordinate-MLP surrogate together with an absolute
positional module, and only then co-adapts that surrogate with the full backbone. RoLA changes this
optimization picture. Our added parameters
$$
\Theta_{\mathrm{new}}=\{\Pq,\Pk,W_g,b_g\}
$$
sit directly on the final low-rank attention path, and their outputs are fused with the sparse branch through
the same forward computation used at inference.

This architecture makes a single joint stage the natural default for three reasons. First, the branch is
initialized near the sparse baseline because the gate starts almost closed ($W_g=0$, $b_g<0$), so early
optimization does not abruptly perturb the pre-trained model. Second, the projections $\Pq,\Pk$, the gate,
and the backbone attention statistics must co-adapt under the final diffusion objective; training them
separately would introduce an avoidable stage mismatch between branch fitting and whole-model behavior.
Third, unlike the absolute-PE variant of the low-rank Fourier compensator~\citep{liu2026ropeslr}, RoLA does
not add a separate positional module whose standalone stabilization motivates a warm-up stage.

We therefore adopt a single-stage full fine-tuning recipe for RoLA. This choice should be read as an
architecture-matched training strategy rather than a blanket claim that two-stage optimization is inferior for
all sparse--low-rank designs. To keep this distinction explicit, we report a controlled single-stage vs.\
two-stage ablation in \cref{app:trainabl}; Table~\ref{tab:abl_train} reports the corresponding
comparison under the same VBench setting.

\subsection{Training-Strategy Ablation}
\label{app:trainabl}

Table~\ref{tab:abl_train} isolates the optimization question from the architectural one. This
comparison keeps the backbone, sparsity, data, optimizer, and total update budget fixed, and changes only the
post-training schedule: \emph{single-stage} means the joint optimization used by RoLA, whereas
\emph{two-stage} follows the prior RoPeSLR-style warm-up-then-joint recipe. We evaluate both strategies
under the same VBench setting to compare their downstream generation quality.

\begin{table}[h]
\centering
\small
\renewcommand{\arraystretch}{1.10}
\setlength{\tabcolsep}{4.5pt}
\caption{Controlled ablation of post-training strategy.}
\label{tab:abl_train}
\begin{tabular}{lccccc}
\arrayrulecolor{tridentRed}
\toprule
\rowcolor{tridentHeader}
\textbf{Strategy} & \textbf{SC}$\uparrow$ & \textbf{MS}$\uparrow$ & \textbf{BC}$\uparrow$ & \textbf{IQ}$\uparrow$ & \textbf{AQ}$\uparrow$ \\
\arrayrulecolor{tridentOrange}
\midrule
\rowcolor{tridentLight}
\textcolor{tridentRed}{\textbf{Single-stage (ours)}} & 0.9629 & 0.9853 & 0.9537 & 0.6783 & 0.6294 \\
Two-stage (RoPeSLR-style) & 0.9631 & 0.9842 & 0.9521 & 0.6778 & 0.6296 \\
\arrayrulecolor{tridentRed}
\bottomrule
\end{tabular}
\arrayrulecolor{black}
\end{table}

Table~\ref{tab:hyper} lists the main hyperparameters used for the single-stage full fine-tuning
(\cref{sec:posttrain}). The newly introduced parameters
$\Theta_{\mathrm{new}}=\{\Pq,\Pk,W_g,b_g\}$ use a higher learning rate than the backbone, since
they are trained from initialization while the backbone is only lightly adapted. This stage is
deliberately lightweight: RoLA adapts an already pre-trained video DiT rather than learning
video generation from scratch. The rotary pairing and frequency schedule are directly inherited
from the pre-trained backbone and fixed by the architecture, so the new projections only need to
adapt the pre-trained query/key features to the low-rank feature space. In addition, the gate is
initialized to a small value, allowing the new branch to be introduced gradually. In our setting,
2,000 training samples over four epochs are sufficient for effective adaptation.

\begin{table}[h]
\centering
\small
\renewcommand{\arraystretch}{1.10}
\setlength{\tabcolsep}{7.0pt}
\caption{Training and model hyperparameters for RoLA on Wan2.1-T2V-14B.}
\label{tab:hyper}
\begin{tabular}{ll}
\arrayrulecolor{tridentRed}
\toprule
\rowcolor{tridentHeader}
\textbf{Hyperparameter} & \textbf{Value} \\
\arrayrulecolor{tridentOrange}
\midrule
Optimizer & AdamW \\
$\beta_1,\beta_2$ & $0.9,\ 0.999$ \\
Weight decay & $10^{-5}$ \\
Precision & BF16 \\
Low-rank dimension $\rr$ & $64$ \\
Gate bias init $b_0$ & $-1.946$ \\
Global batch size & 32 \\
\arrayrulecolor{tridentOrange}
\midrule
\rowcolor{tridentLight}
\multicolumn{2}{l}{\textcolor{tridentRed}{\emph{Single-stage full fine-tuning}}} \\
Objective & Flow-matching \\
LR schedule & Cosine decay \\
Training samples & 2000 \\
Training epochs & 4 \\
\arrayrulecolor{tridentOrange}
\midrule
\rowcolor{tridentLight}
\multicolumn{2}{l}{\textcolor{tridentRed}{\emph{Learning rates}}} \\
Backbone & $2\times10^{-6}$ \\
Low-rank projections ($\Pq,\Pk$) & $5\times10^{-5}$ \\
Gate projection $W_g$ & $3\times10^{-5}$ \\
Gate bias $b_g$ & $5\times10^{-5}$ \\
\arrayrulecolor{tridentRed}
\bottomrule
\end{tabular}
\arrayrulecolor{black}
\end{table}

\section{Computational Complexity Analysis}
\label{app:complexity}

The rotary-positioned low-rank branch retains the $\bigo(L)$ scaling of linear attention: the global context
$$
C=\klr^\top V\in\R^{\rr\times \dd}
$$
is formed once at cost $\bigo(L\rr \dd)$ and reused across all queries, and the rank-truncated rotation adds
only $\bigo(L\rr)$ elementwise operations. Compared with the absolute-PE variant of the low-rank Fourier
compensator in \citet{liu2026ropeslr}, which also has $\bigo(L\rr \dd)$ complexity but carries extra learnable positional
parameters, RoLA introduces zero additional positional parameters by reusing the backbone rotary tensors.
For a fixed sparsity ratio and low-rank width, the asymptotic inference cost of the full module is dominated
by the sparse branch, while the low-rank branch contributes only a linear overhead term.

We give a detailed FLOPs analysis of the RoLA attention module during the forward pass, establishing two
claims: (i) the rotary low-rank branch retains linear complexity in sequence length, with its cost controlled
by the bottleneck width; and (ii) relative to pure sparse attention, the overhead of the low-rank branch is
asymptotically negligible for long sequences under a fixed sparsity ratio and low-rank width.

\noindent\textbf{Notation.}
Let $B$ be the batch size, $L$ the sequence length, $H$ the number of heads, $\dd$ the per-head dimension
($d_{\mathrm{model}}=H\dd$), $S$ the sparsity ratio (active fraction $1-S$), and $\rr$ the low-rank
bottleneck. We report dominant matmul FLOPs throughout. A matmul $\R^{M\times N}\cdot\R^{N\times P}$ costs
$2MNP$ FLOPs; lower-order operations, including Softmax, RoPE, RMSNorm, sigmoid, and gated addition, are
omitted from the FLOPs count.

\noindent\textbf{Cost anatomy.}
The module decomposes into three parts.

\emph{(1) Sparse branch.}
With block sparsity $S$, each query attends to $L(1-S)$ keys; the $QK^\top$ scores and
$\mathrm{Softmax}(\cdot)V$ aggregation give
\begin{equation}
C_{\mathrm{sparse}} = 4\,B H L^2 (1-S)\,\dd .
\end{equation}

\emph{(2) Low-rank branch.}
Down-projection $Q^{\mathrm{n}}\Pq^\top$ (and the key side) costs $2BL\dd\rr$ each; the context
$C=\klr^\top V$ costs $2BL\rr\dd$; the readout $\qlr C$ costs $2BL\rr\dd$. Summing over $H$ heads,
\begin{equation}
C_{\mathrm{lowrank}} = 8\,B H L \dd\,\rr .
\end{equation}

\emph{(3) Gating and fusion.}
The scalar-gate projection $X W_g$ costs $2BHL\dd$, i.e.,
\begin{equation}
C_{\mathrm{fusion}}=2BHL\dd .
\end{equation}
The total is therefore
\begin{equation}
C_{\mathrm{RoLA}} = 4BHL^2(1-S)\dd + 8BHL\dd\,\rr + 2BHL\dd .
\end{equation}

\noindent\textbf{Cost comparison with sparse-linear frameworks.}
A standard full-width linear branch with feature map
$\phi:\R^{\dd}\to\R^{\dd}$ forms
$K\!V_{\mathrm{ctx}}=\phi(K)^\top V$ ($2BHL\dd^2$)
and retrieves $\phi(Q)K\!V_{\mathrm{ctx}}$ ($2BHL\dd^2$),
giving a dominant matmul cost
$C_{\mathrm{linear}}=4BHL\dd^2$.
Under the same FLOPs accounting, our low-rank branch has
$C_{\mathrm{lowrank}}=8BHL\dd\,\rr$, including the query/key
down-projections, context aggregation, and readout. Therefore,
\begin{equation}
\frac{C_{\mathrm{lowrank}}}{C_{\mathrm{linear}}}
=
\frac{8BHL\dd\,\rr}{4BHL\dd^2}
=
\frac{2\rr}{\dd}.
\end{equation}
Thus, the low-rank branch is cheaper than a full-width linear branch when
$\rr<\dd/2$, has the same dominant matmul FLOPs when $\rr=\dd/2$, and remains
linear in sequence length in either case. Under our default setting
$\rr=64$ and $\dd=128$, the two branches have equal leading-order matmul
FLOPs; the advantage of RoLA therefore lies not in reducing this
leading-order cost, but in preserving relative-position structure
within a compressed rotary feature space.

\noindent\textbf{Asymptotically negligible overhead over pure sparse attention.}
The extra counted FLOPs relative to the pure sparse baseline are
\begin{equation}
\eta
=
\frac{C_{\mathrm{lowrank}}+C_{\mathrm{fusion}}}{C_{\mathrm{sparse}}}
=
\frac{8BHL\dd\,\rr + 2BHL\dd}
     {4BHL^2(1-S)\dd}
=
\frac{2\rr+0.5}{L(1-S)} .
\end{equation}
For the fixed sparsity ratio $S<1$ and fixed low-rank width $\rr$ used in our experiments,
the number of active keys per query, $L(1-S)$, grows linearly with the sequence length $L$.
Therefore,
\begin{equation}
\eta=\mathcal{O}(L^{-1}),
\qquad
\lim_{L\to\infty}\eta=0.
\end{equation}
Thus, under the fixed-sparsity regime considered in this work, the additional FLOPs introduced by
the low-rank branch and gated fusion become asymptotically negligible relative to the sparse-attention
computation as the video sequence length increases. In other words, RoLA adds only a linear-cost
global branch on top of the sparse attention module, while the dominant sparse-attention cost grows
quadratically with $L$ for a fixed sparsity ratio.

\raggedbottom

\section{Additional Qualitative Comparisons}
\label{app:qual_compare}
\vspace{-8pt} 
The following figure shows representative video samples. RoLA preserves better temporal coherence and scene layout under high sparsity compared with competing sparse-global methods.
\vspace{-6pt} 

\begin{figure}[H]
\centering
\setlength{\abovecaptionskip}{3pt}
\setlength{\belowcaptionskip}{0pt}
\setlength{\intextsep}{6pt}

\begin{subfigure}{0.86\textwidth}
\centering
\includegraphics[width=\linewidth]{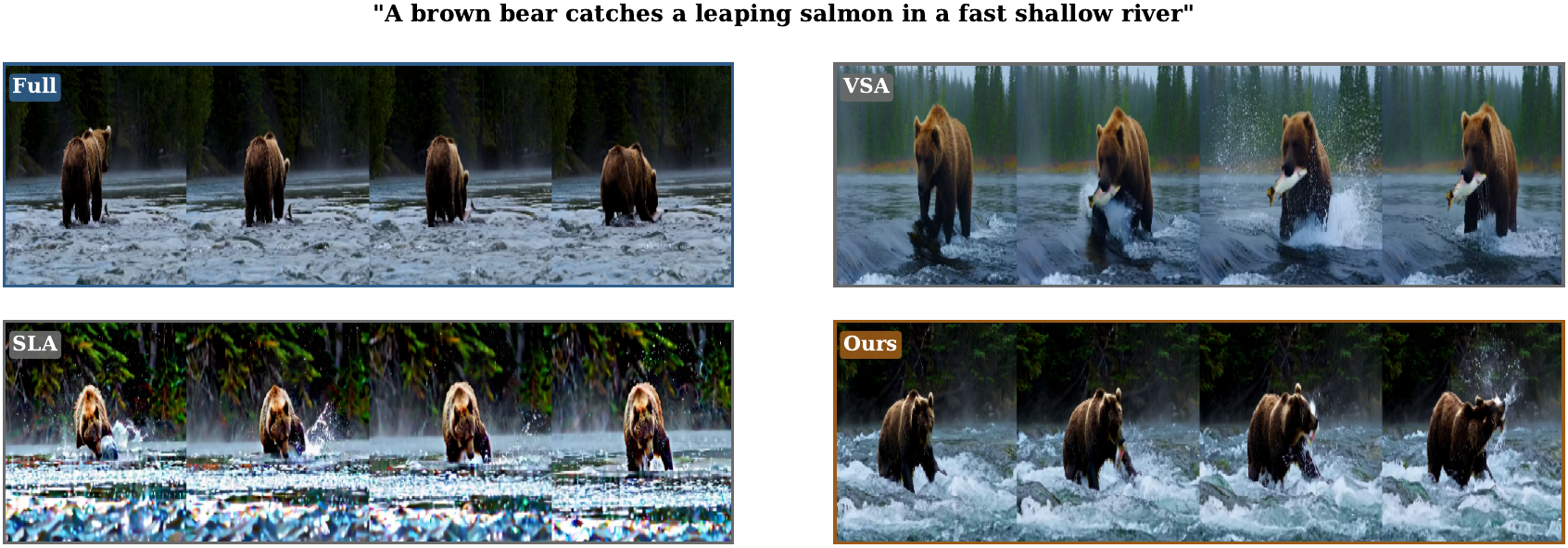}
\caption{}
\end{subfigure}

\vspace{1pt}

\begin{subfigure}{0.86\textwidth}
\centering
\includegraphics[width=\linewidth]{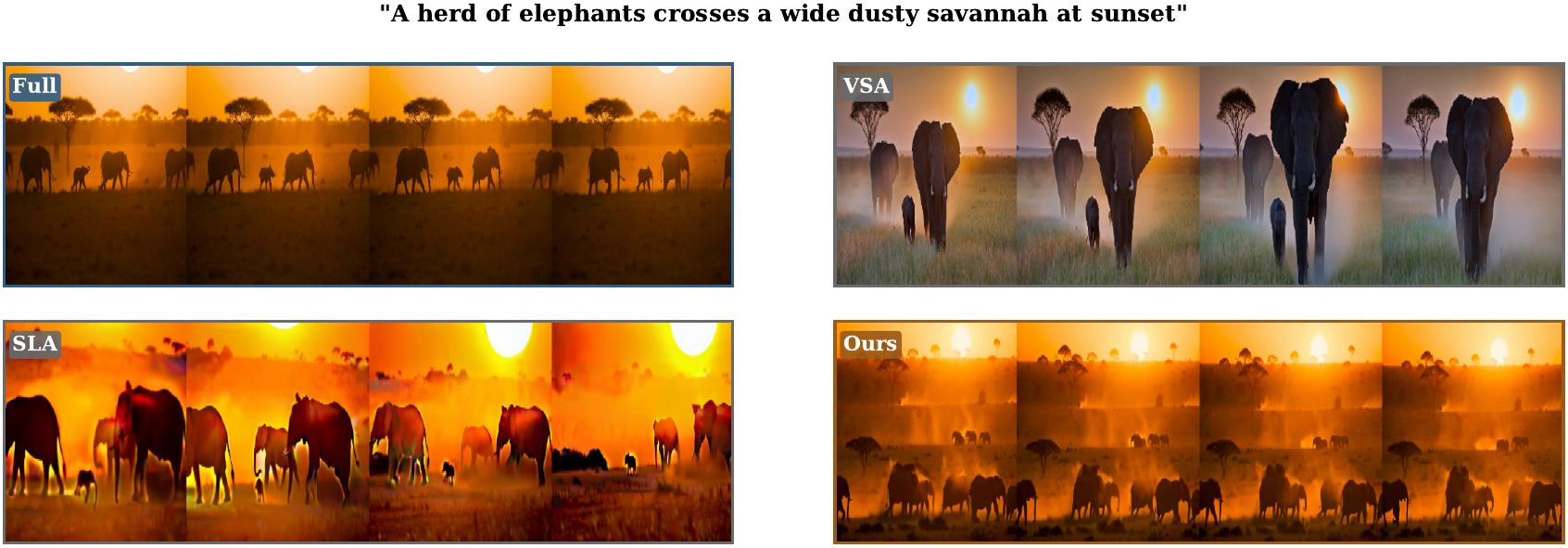}
\caption{}
\end{subfigure}

\vspace{1pt}

\begin{subfigure}{0.86\textwidth}
\centering
\includegraphics[width=\linewidth]{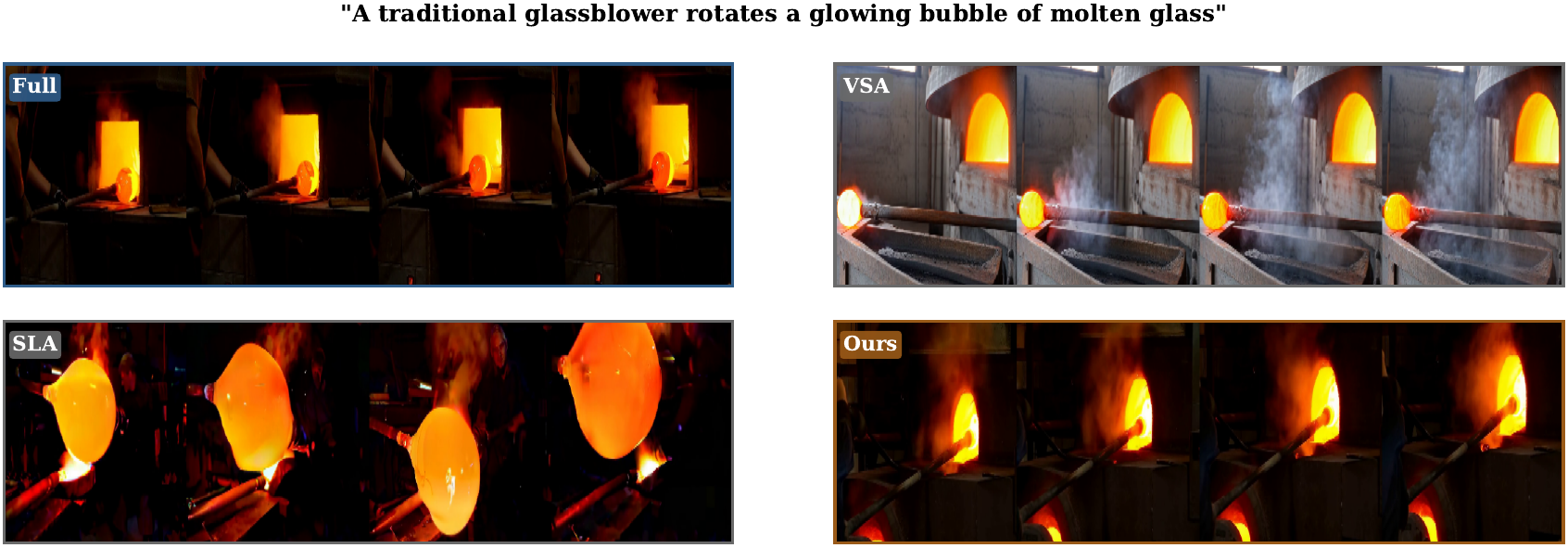}
\caption{}
\end{subfigure}

\caption{Qualitative comparison on five video prompts. (a) Bear, (b) Elephants, (c) Glassblower, (d) Lighthouse, (e) Tuscany.}
\label{fig:qualitative_compare}
\end{figure}

\begin{figure}[H]
\ContinuedFloat
\centering
\setcounter{subfigure}{3}
\setlength{\abovecaptionskip}{3pt}
\setlength{\belowcaptionskip}{0pt}
\setlength{\intextsep}{6pt}

\begin{subfigure}{0.86\textwidth}
\centering
\includegraphics[width=\linewidth]{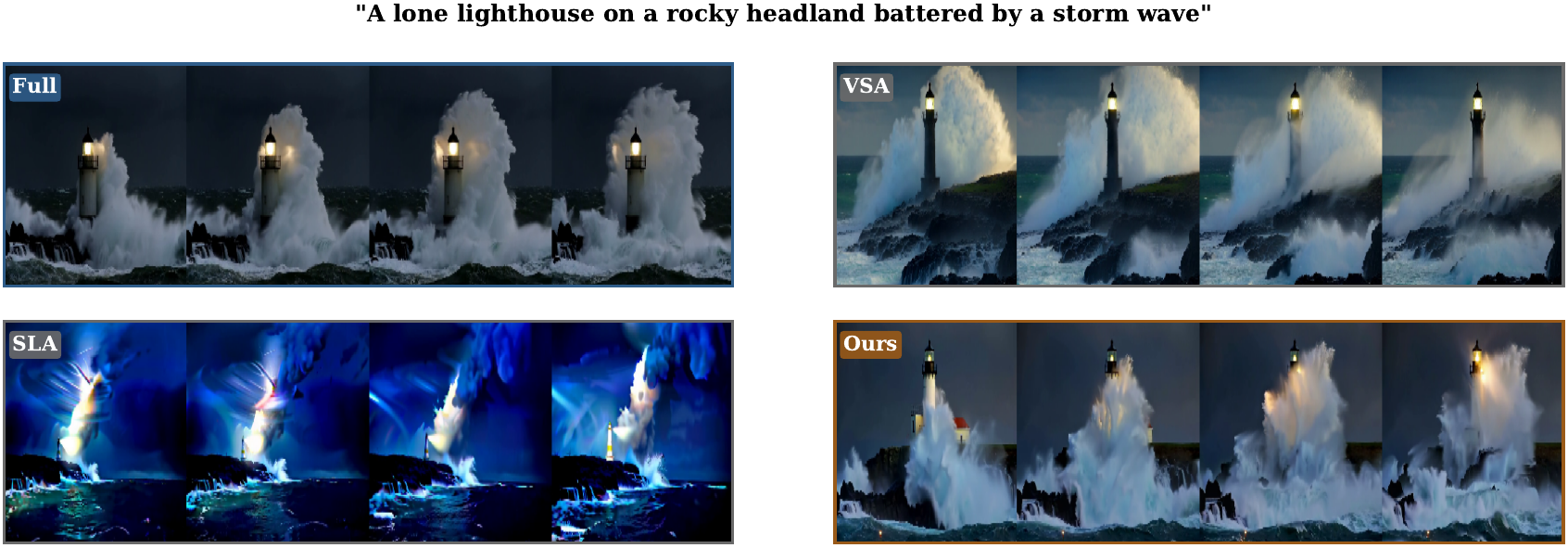}
\caption{}
\end{subfigure}

\vspace{1pt}

\begin{subfigure}{0.86\textwidth}
\centering
\includegraphics[width=\linewidth]{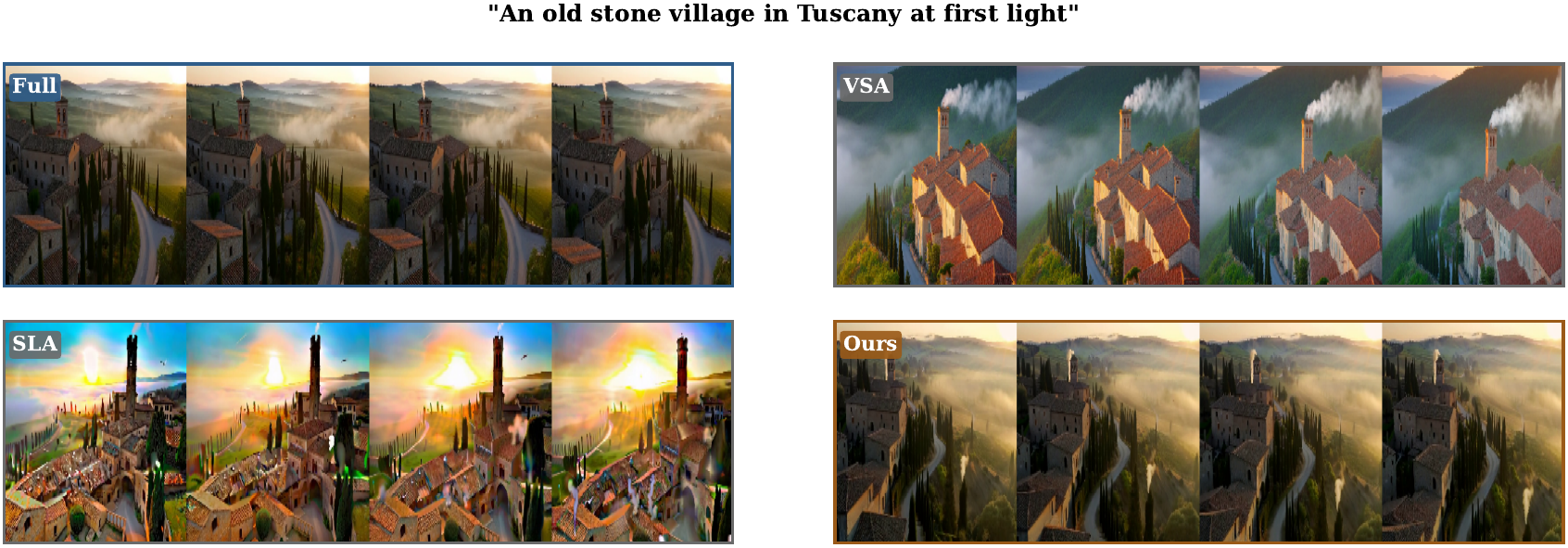}
\caption{}
\end{subfigure}

\caption{Qualitative comparison on five video prompts (continued). (d) Lighthouse, (e) Tuscany.}
\end{figure}

\clearpage
\flushbottom

\end{document}